\documentclass{article}

% if you need to pass options to natbib, use, e.g.:
%     \PassOptionsToPackage{numbers, compress}{natbib}
% before loading neurips_2024

% ready for submission
% \usepackage{neurips_2024}
% \usepackage[preprint]{neurips_2024}
% \usepackage[final]{neurips_2024}

% to compile a preprint version, e.g., for submission to arXiv, add add the
% [preprint] option:
%     \usepackage[preprint]{neurips_2024}

% to compile a camera-ready version, add the [final] option, e.g.:
%     \usepackage[final]{neurips_2024}

% to avoid loading the natbib package, add option nonatbib:
\usepackage[nonatbib,final]{neurips_2024}

\usepackage[utf8]{inputenc} % allow utf-8 input
\usepackage[T1]{fontenc}    % use 8-bit T1 fonts
\usepackage{hyperref}       % hyperlinks
\usepackage{url}            % simple URL typesetting
\usepackage{booktabs}       % professional-quality tables
\usepackage{amsfonts}       % blackboard math symbols
\usepackage{amsmath}
\usepackage{nicefrac}       % compact symbols for 1/2, etc.
\usepackage{microtype}      % microtypography
\usepackage{xcolor}  % colors 
\usepackage{multirow}

\usepackage[textsize=tiny]{todonotes}
\setlength{\marginparwidth}{3cm}  % For todonotes, does not impact the rest

\usepackage[export]{adjustbox}  % trim

\usepackage{amsthm}
\theoremstyle{plain}
\newtheorem{theorem}{Theorem}

\theoremstyle{definition}
\newtheorem{definition}{Definition}

\theoremstyle{remark}

\usepackage{bbm}
\newcommand\norm[1]{\left\lVert#1\right\rVert}

\DeclareMathOperator*{\argmin}{argmin}

\usepackage[capitalize,noabbrev]{cleveref}

% Overwrites the ugly default hyperlinks
% \definecolor{alizarin}{RGB}{227,38,54}
\definecolor{ultramarine}{RGB}{24,13,191}
\hypersetup{
  % linkcolor  = alizarin,
  citecolor  = blue,
  % urlcolor   = ultramarine,
  colorlinks = true
}

\title{Wormhole Loss for Partial Shape Matching}

% The \author macro works with any number of authors. There are two commands
% used to separate the names and addresses of multiple authors: \And and \AND.
%
% Using \And between authors leaves it to LaTeX to determine where to break the
% lines. Using \AND forces a line break at that point. So, if LaTeX puts 3 of 4
% authors names on the first line, and the last on the second line, try using
% \AND instead of \And before the third author name.

\author{Amit Bracha\thanks{Equal contribution}
\quad 
Thomas Dag\`es\footnotemark[1]
\quad 
Ron Kimmel
\\
Technion - Israel Institute of Technology\\
Haifa, Israel\\
\texttt{\{amit.bracha,thomas.dages\}@cs.technion.ac.il}\\
}

% \author{%
%   David S.~Hippocampus\thanks{Use footnote for providing further information
%     about author (webpage, alternative address)---\emph{not} for acknowledging
%     funding agencies.} \\
%   Department of Computer Science\\
%   Cranberry-Lemon University\\
%   Pittsburgh, PA 15213 \\
%   \texttt{hippo@cs.cranberry-lemon.edu} \\
%   % examples of more authors
%   % \And
%   % Coauthor \\
%   % Affiliation \\
%   % Address \\
%   % \texttt{email} \\
%   % \AND
%   % Coauthor \\
%   % Affiliation \\
%   % Address \\
%   % \texttt{email} \\
%   % \And
%   % Coauthor \\
%   % Affiliation \\
%   % Address \\
%   % \texttt{email} \\
%   % \And
%   % Coauthor \\
%   % Affiliation \\
%   % Address \\
%   % \texttt{email} \\
% }

\begin{document}

{
\centerline{
\color{gray} \small Accepted for publication at the conference on Neural Information Processing Systems (NeurIPS) 2024.
}
\vspace{1em}
}

\maketitle

\begin{abstract}

When matching parts of a surface to its whole, a fundamental question arises: Which points should be included in the matching process? 
The issue is intensified when using isometry to measure similarity, as it requires the validation of whether distances measured between pairs of surface points should influence the matching process.
The approach we propose treats surfaces as manifolds equipped with geodesic distances, and addresses the partial shape matching challenge by introducing a novel criterion to meticulously search for consistent distances between pairs of points. 
The new criterion explores the relation between intrinsic geodesic distances between the points, geodesic distances between the points and surface boundaries, and extrinsic distances between boundary points measured in the embedding space.
It is shown to be less restrictive compared to previous measures and achieves state-of-the-art results when used as a loss function in training networks for partial shape matching.

\end{abstract}

%___________________________
\section{Introduction}

Shape correspondence is a core challenge in computer graphics and computer vision, distinguished by its wide array of applications extending from 3D modeling and animation to object recognition and beyond.
The task involves establishing mappings between corresponding points
% or features
across different surfaces that undergo non-rigid transformations.
Shape correspondence grows particularly intricate when it involves partial surfaces, where the objective is to find correspondences between parts of surfaces or between a complete surface and a part of another.
This sub-task is particularly difficult as it involves incomplete data and possible different topology of the matched surfaces.
Consequently, training neural networks for the partial matching task is challenging, especially for unsupervised methods, as there is no information about which parts are missing and which should be matched.

The state of the art in partial shape matching is performed with Functional Maps (FM), a popular framework for relating functions defined on surfaces via basis functions defined on the surfaces \cite{attaiki2023understanding,cao2023unsupervised,geomfmnet,halimi2019unsupervised,marin2020correspondence,pfm,fsp}. 
Recently, it was argued that the unused information from the full surface impairs the learning process when FM is incorporated in the learning pipeline \cite{bracha2023partial}, 
with better results obtained when bypassing FM altogether by direct estimation of correspondences.
This method is primarily guided by a loss function preserving the pairwise geodesic distances \cite{halimi2019unsupervised}, which relies on the fact that such distances are preserved under isometric transformations.
When dealing with partial surfaces, not all distances are preserved, as minimal geodesics on the full surface may go through a region which is missing in the partial surface. 
Therefore, undesired biases are integrated into the learning procedure, which can impair the quality of the learned correspondence. 
This applies to both FM-based unsupervised loss functions \cite{roufosse2019unsupervised} and geodesic-based ones \cite{halimi2019unsupervised}, where isometry between the surfaces is assumed.

In this paper, we present a novel loss for partial shape correspondence that takes into account partial surfaces.
It is constructed upon the notion of \textit{consistent} pairs of points, where a pair is said to be consistent if the geodesic distance between the points is the same on the partial and full surface. 
Our task is to define a criterion that captures as much as possible consistent pairs, where these pairs are referred to as \textit{guaranteed} pairs.
Using such a criterion, we can limit our matching procedures to consider only guaranteed pairs, thereby avoiding distortions.
Filtering out potentially inconsistent distances between pairs of points was first studied in \cite{bronstein2006generalized,bronstein2006efficient,bronstein2006robust,rosman2008topologically,rosman2010nonlinear}. In particular, in \cite{bronstein2006efficient,rosman2008topologically} the authors showed that for non-Euclidean spaces the geodesic distance between two points on a manifold is preserved when this distance is smaller than the sum of the surface distances of each point to the boundary of the surface.
We show that this criterion is too conservative and it filters out a substantial number of consistent pairs of points. 
It can be improved by using the extrinsic information encapsulated 
in the manifold's embedding space.
The new condition can be shown to provide significantly more guaranteed pairs of points.

Using this novel criterion to find consistent pairs of points, we present a new loss function for unsupervised shape correspondence tailored for partial surfaces. 
It is based on the ratio between the geodesic pairwise distance, that is calculated on the partial surface, and a measure defined by our criterion, which involves distances to  points on the boundaries and the distance in the embedding
space between the corresponding boundary points.
The new loss was used for training an unsupervised shape correspondence neural network achieving state-of-the-art (SOTA) results on the reference SHREC'16 CUTS and HOLES datasets \cite{cosmo2016shrec} and on the recent PFAUST dataset \cite{bracha2023partial}.
Our code can be found at \url{https://github.com/ABracha/Wormhole}.

\textbf{Contributions}
\begin{itemize}
    \item We introduce a novel criterion for identifying consistent pairs, which are pairs of points between which the geodesic distance is the same for the full and the partial surfaces.
    \item Using this criterion, we create a novel unsupervised loss function tailored specifically for partial surfaces, achieving SOTA results on partial shape correspondence benchmarks.  
\end{itemize}

%__________________________________________________________________
\section{Related Efforts}
Early attempts to solve shape correspondence involved extracting hand-crafted features for each point \cite{wks,hks}.
Leveraging the intrinsic nature of the Laplace-Beltrami operator (LBO), its eigenfunctions were often employed to define such features on the manifold, invariant to non-rigid isometry, thereby, removing the dependency on the manifold's embedding in Euclidean space.
Later, the Functional-Map (FM) framework was proposed \cite{fm}.
It builds on the fact that the mapping of consistent feature representations in local bases is linear, even for complex non-rigid spatial deformations of the manifolds. 
In fact, the first neural network for solving shape correspondence was based on the FM framework  \cite{litany2017deep}, which introduced a differentiable FM-layer to learn the correspondence map.

More recent methods suggested two alternative approaches for unsupervised shape matching. 
The first is based on the fact that isometries preserve geodesic distances \cite{halimi2019unsupervised},
and the second is based on properties satisfied by the FM operator for isometric transformations \cite{roufosse2019unsupervised}.
Improved architectures followed  \cite{geomfmnet, sharma2020weakly, li2020shape, sharp2022diffusionnet}.
Alternative loss functions were suggested, such as a supervised contrastive loss \cite{srfeat}, or using the objective function from \cite{attaiki2023understanding}, which penalizes the difference between the FM output from the FM-layer and the FM estimated from the point-to-point mapping \cite{cao2023unsupervised}.
Other papers explored different metrics, such as the scale invariant metric \cite{bracha2020shape, pazi2020unsupervised}, or anisotropic Riemannian metrics using the Finsler-based LBO \cite{weber2024finsler}.

Several efforts were made to tackle surface matching under partiality. 
Some approaches are based on a search strategy to find either the missing parts \cite{bensaid2023partialpiecewise,bensaid2023partial,rampini2019correspondence}, or direct correspondence by exhaustive search \cite{ehm2024partial}.
Few methods learn to find the correspondence,
one such example learned linear invariant bases suited for partial surfaces  rather than the classical LBO eigenspaces \cite{marin2020correspondence}. 
Another supervised learning approach applied an attention layer \cite{vaswani2017attention} on extracted features, allowing feature interaction between the full and partial surfaces prior to the FM-layer \cite{DPFM}. 
An unsupervised learning method, based on
preserving intrinsic distances, avoided the use of an FM-layer altogether  \cite{bracha2023partial}.
The motivation was that FM introduces noise when used to match the whole to a part of a surface.

When dealing with partial surfaces, one needs to consider the fact that distances can be significantly different when measured between corresponding points in the partial and the whole surface.
Thus, encouraging distance preservation between corresponding points can introduce undesired biases on the learned matching.
Although  \cite{bracha2023partial} reported the current SOTA, it only tackles part of the problem. 
Here, we introduce a novel loss function based on \textit{consistent} pairs of points and geodesic distances, greatly reducing the effect of inconsistent distances on the learned correspondence.

To construct our loss function, we first revisit the notion of consistent pairs. 
The link between geodesic distances and distances to the boundaries of flat and non-flat surfaces was studied in \cite{bronstein2006efficient,bronstein2006robust,rosman2008topologically,rosman2010nonlinear}. 
Validity conditions were suggested to predict whether distances between matching points are consistent between the partial and the full surface. 
Here, we revisit and refine these conditions, and introduce a more inclusive criterion that guarantees more consistent pairs by which the matching results improve.

%________________
\section{On the Consistency of Distances between Pairs of Points in Partial Surfaces}
\label{sec: on valid distances in partial shapes}

\subsection{Consistent and Guaranteed Pairs of Points}

\begin{figure}[htbp]
    \centering
%\mbox{    
\includegraphics[width=0.9\textwidth]{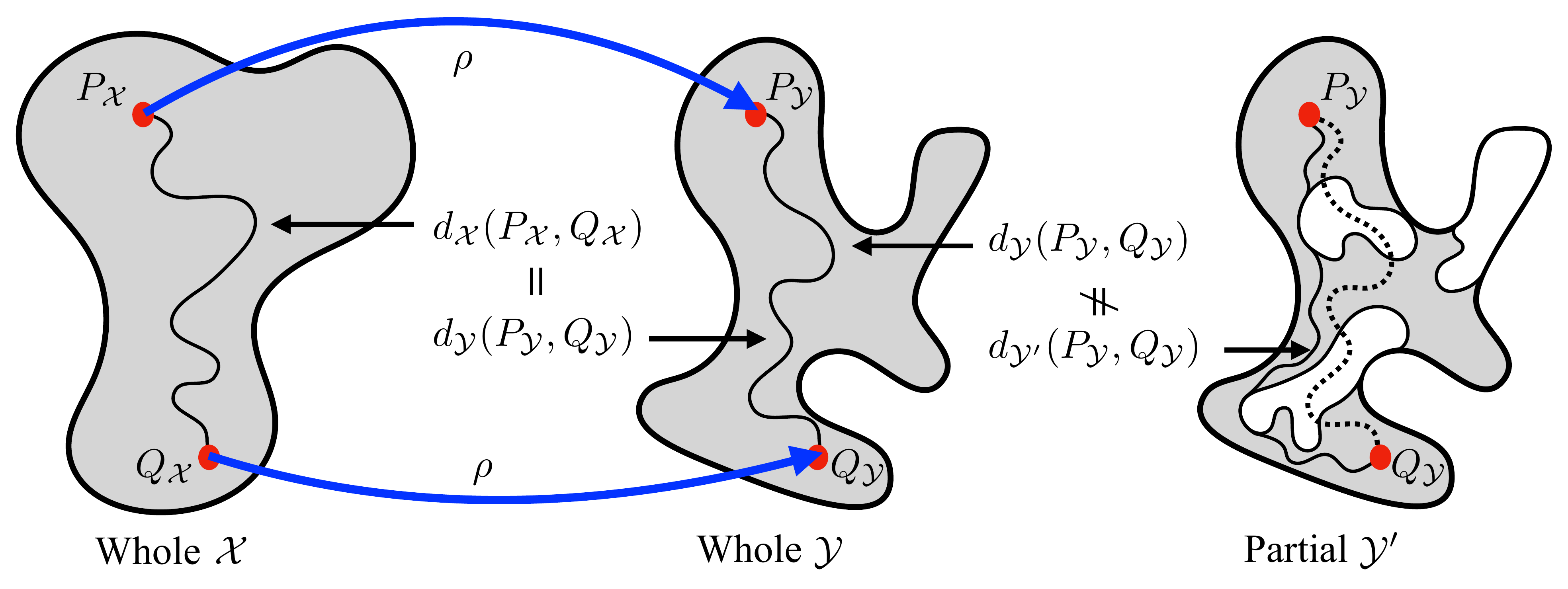}
    \caption{For a distance preserving map between full surfaces $\mathcal{X}$ and $\mathcal{Y}$, also known as an isometry, the minimal geodesics in the partial version $\mathcal{Y}'$ may not correspond to those in the full surfaces $\mathcal{Y}$.
    That is, the geodesic distances between corresponding points may get larger. 
    }
    \label{fig: full to partial sketch}
\end{figure}

In this section, we present the concept of \textit{consistent} and \textit{guaranteed} pairs of points.

Let $\mathcal{X}$ and $\mathcal{Y}$ be isometric surfaces with a
bijective mapping $\rho:\mathcal{X}\to \mathcal{Y}$. 
In partial shape matching, the surfaces may have missing parts, meaning that we only have access to $\mathcal{X}'\subset \mathcal{X}$ and $\mathcal{Y}'\subset \mathcal{Y}$. 
For simplicity, we assume that we have access to the full surface $\mathcal{X}$ but not the full surface $\mathcal{Y}$ and so $\mathcal{X}'=\mathcal{X}$ and $\mathcal{Y}'\neq \mathcal{Y}$.
We can easily generalise the discussion if $\mathcal{X}'\neq\mathcal{X}$.
Consider two points $P_\mathcal{Y}$ and $Q_\mathcal{Y}$ on the partial surface $\mathcal{Y}'$ and their corresponding points $P_\mathcal{X} = \rho^{-1}(P_\mathcal{Y})$ and $Q_\mathcal{X} = \rho^{-1}(Q_\mathcal{Y})$ on $\mathcal{X}$. 
As the mapping $\rho$ is isometric, the geodesic distances, reflecting the length of the shortest paths, on the full surfaces $d_\mathcal{X}$ and $d_\mathcal{Y}$ are equal by definition, $d_\mathcal{X}(P_\mathcal{X},Q_\mathcal{X}) = d_\mathcal{Y}(P_\mathcal{Y},Q_\mathcal{Y})$. 
However, it may be that the shortest path on the full surface $\mathcal{Y}$ between $P_\mathcal{Y}$ and $Q_\mathcal{Y}$ passes through a missing part of $\mathcal{Y}'$. 
In such a case, computing the geodesic path between these two points could lead to a longer path than the one on the full surface, implying that $d_\mathcal{Y}(P_\mathcal{Y},Q_\mathcal{Y})\neq d_{\mathcal{Y}'}(P_\mathcal{Y},Q_\mathcal{Y})$ and that, in turn, $d_\mathcal{X}(P_\mathcal{X},Q_\mathcal{X})\neq d_{\mathcal{Y}'}(P_\mathcal{Y},Q_\mathcal{Y})$, even if the original mapping $\rho$ is isometric (see \cref{fig: full to partial sketch}). 
As such, methods relying on or enforcing the preservation of geodesic distances based on the isometry  of the transformation $\rho$ are inappropriate when dealing with partial surfaces.
They could incorrectly enforce equal distances when they should not. 
Although not all geodesic distances are preserved for partial surfaces, many nevertheless are preserved, leading to the definition of \textit{consistent} pairs.
\begin{definition}[Consistent Pair of Points]
    A pair of points $P_\mathcal{Y}$ and $Q_\mathcal{Y}$ of a partial surface $\mathcal{Y}'$ is said to be \textit{consistent} if the geodesic distance on the partial surface is the same as that on the original full surface, i.e. $d_\mathcal{Y}(P_\mathcal{Y}, Q_\mathcal{Y}) = d_{\mathcal{Y}'}(P_\mathcal{Y}, Q_\mathcal{Y})$.
\end{definition}

As there are many consistent pairs even in cases of cuts and partiality, methods enforcing distance preservation between all pairs, even inconsistent ones, perform fairly well.
However, penalizing distance dissimilarity for inconsistent pairs is undesired and decreases the quality of the resulting mapping. 
Our goal is to filter the set of pairs by finding as many as possible consistent pairs, and then relying only on these \textit{guaranteed} pairs when matching surfaces.

\begin{definition}[Guaranteed Pair of Points]
    A pair of points $P_\mathcal{Y}$ and $Q_\mathcal{Y}$ of a partial surface $\mathcal{Y}'$ is said to be \textit{guaranteed} with respect to a criterion $\mathcal{C}:\mathcal{Y}'\times \mathcal{Y}'\to \{0,1\}$, or $\mathcal{C}$-guaranteed in short, if the criterion proves that the pair is consistent.
\end{definition}
Note, that the criterion is unsupervised as no oracle provides knowledge from the full surface $\mathcal{Y}$ to compute it.
The better the criterion, the more consistent pairs it finds, allowing for more consistent information to be used for finding the matching, see \cref{fig: valid guaranteed venn diagram}. 
In the rest of this section, we focus on the search for consistent pairs in the partial surface $\mathcal{Y}'$ as a preprocessing step and will return to the surface $\mathcal{X}$ when we perform partial shape matching.

\begin{figure}[htbp]
    \centering
    \includegraphics[width=0.6\textwidth]{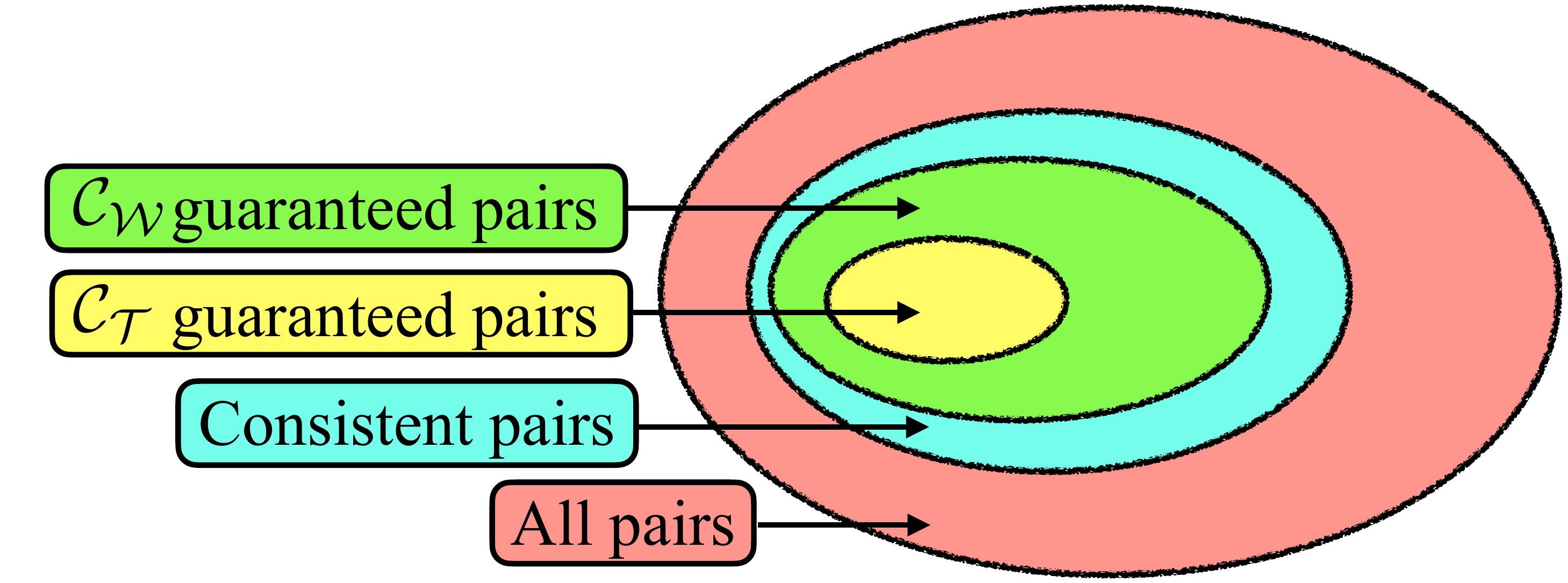}
    \caption{Venn diagrams showing the relation between all pairs of points, consistent and guaranteed pairs for a surface with boundaries. 
    All guaranteed pairs are consistent. 
    Our criterion $\mathcal{C}_{\mathcal{W}}$ is more inclusive than that of \cite{bronstein2006efficient,rosman2008topologically}, $\mathcal{C}_{\mathcal{T}}$. 
    All guaranteed pairs by $\mathcal{C}_{\mathcal{T}}$ are also  guaranteed by $\mathcal{C}_{\mathcal{W}}$.
    }
    \label{fig: valid guaranteed venn diagram}
\end{figure}

%________________________________________________________
\subsection{Expanding the Set of Guaranteed Pairs via Extrinsic Distances between Boundary Points}

We introduce a criterion that finds as many consistent pairs as possible, which we then denote as guaranteed. 
Denote $\mathcal{B}\subset\mathcal{Y}'$ the boundary of $\mathcal{Y}'$.
For simplicity, we treat all boundaries equally.
In \cite{bronstein2006efficient,rosman2008topologically}, the criterion, $\mathcal{C}_{\mathcal T}$, was designed to reject pairs of points, $P_\mathcal{Y}$ and $Q_\mathcal{Y}$ on $\mathcal{Y'}$, for which the sum of their distances on to the boundary $\mathcal{B}$ is less than the geodesic distance between them,
\begin{eqnarray}
    \mathcal{C}_{\mathcal T}(P_\mathcal{Y},Q_\mathcal{Y}) &=&
    \begin{cases}
        1 &\text{if} \;\; d_{\mathcal{Y}'}(P_\mathcal{Y},Q_\mathcal{Y}) \le d_{\mathcal{Y}'}(P_\mathcal{Y}, \mathcal{B}) + d_{\mathcal{Y}'}(Q_\mathcal{Y}, \mathcal{B})\\
        0 &\text{otherwise},
    \end{cases} 
\end{eqnarray}
where $d_{\mathcal{Y}'}(P_\mathcal{Y}, \mathcal{B})$ is the  minimal length of the minimal geodesic of $P_\mathcal{Y}$ to any boundary point,
%point on the boundary $\mathcal{B}$,
\begin{equation}
    d_{\mathcal{Y}'}(P_\mathcal{Y}, \mathcal{B}) = \min\limits_{B\in \mathcal{B}} d_{\mathcal{Y}'}(P_\mathcal{Y}, B).
\end{equation}
This condition naturally follows from the fact that the length of a path between the points in the full surface  $\mathcal{Y}$ that passes through the boundary  $\mathcal{B}$ is at least as long as the sum of their distances to the boundary.

Note, that this criterion is overly conservative and removes  many consistent pairs. 
It practically ignores the length of possible trajectories connecting the boundary points, see \cref{fig: our vs rosman sketch}.
To mitigate this condition, we propose to include extrinsic information that bounds from below the distances between pairs of boundary points. 
Note that when solving intrinsic problems, the practice of combining intrinsic and extrinsic information is uncommon, yet it has been shown to be beneficial for other types of robustness \cite{bronstein2009topology}.
For simplicity, we assume from now on that the manifold is embedded in a Euclidean space.
We then readily use the following straightforward relation between geodesic distances and Euclidean ones.
\begin{theorem}[Euclidean bound]
    \label{th: euclidean bound}
    The geodesic distance between any points $P$ and $Q$ on a surface $\mathcal{Y}\in E$, for some embedding Euclidean space $E=\mathbb{R}^n$, is larger than or equal to the Euclidean distance between the points measured in the embedding space,
    \begin{equation*}
        d_\mathcal{Y}(P, Q) \ge d_E(P, Q),
    \end{equation*}
    where $d_E(P, Q) = \lVert P - Q \rVert_2$.
\end{theorem}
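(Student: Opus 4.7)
The plan is to unwind the definition of geodesic distance on an embedded surface and then apply a one-line vector-valued triangle inequality. Recall that since $\mathcal{Y}$ sits inside a Euclidean space $E$, the Riemannian metric induced on $\mathcal{Y}$ as a submanifold agrees with the Euclidean inner product restricted to tangent vectors. Consequently, for any (piecewise) smooth path $\gamma:[0,1]\to\mathcal{Y}$ with $\gamma(0)=P$ and $\gamma(1)=Q$, its intrinsic length on $\mathcal{Y}$ coincides with its Euclidean length as a curve in $E$, namely $L(\gamma)=\int_0^1\lVert\gamma'(t)\rVert_2\,dt$.

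Next, I would bound $L(\gamma)$ from below by $d_E(P,Q)$ using the fundamental theorem of calculus applied componentwise. Writing $Q-P=\gamma(1)-\gamma(0)=\int_0^1\gamma'(t)\,dt$ and applying the Minkowski / vector-valued triangle inequality to the Bochner integral gives
\begin{equation*}
    d_E(P,Q) \;=\; \left\lVert \int_0^1 \gamma'(t)\,dt \right\rVert_2 \;\le\; \int_0^1 \lVert \gamma'(t)\rVert_2 \,dt \;=\; L(\gamma).
\end{equation*}
This inequality holds for every admissible path $\gamma$ on $\mathcal{Y}$ joining $P$ and $Q$.

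Finally, I would take the infimum over all such paths. Since $d_\mathcal{Y}(P,Q)=\inf_\gamma L(\gamma)$ by definition of the geodesic distance, the inequality $d_E(P,Q)\le L(\gamma)$ being uniform in $\gamma$ passes to the infimum, yielding $d_E(P,Q)\le d_\mathcal{Y}(P,Q)$, which is the desired bound.

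There is essentially no serious obstacle here; the only mild subtlety is making sure the argument is valid at the right level of regularity. If one works with rectifiable rather than smooth curves, the same conclusion holds by using the definition of length as the supremum of polygonal approximations together with the ordinary triangle inequality $\lVert P-Q\rVert_2\le \sum_i \lVert x_{i+1}-x_i\rVert_2$ over any partition. Either formulation gives the theorem, and the proof remains a few lines.
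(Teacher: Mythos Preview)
Your proof is correct and follows essentially the same approach as the paper: both observe that the induced Riemannian metric makes the intrinsic length of a curve equal to its Euclidean length in $E$, and then compare with the straight line. Your version simply supplies the explicit computation (via the integral triangle inequality and taking the infimum) that the paper leaves implicit.
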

For a proof see \cref{sec: proof th euclidean bound}.
We propose to consider trajectories passing through any pairs of boundary points $B_1$ and $B_2$ in $\mathcal{B}$. 
By acting like a {\it wormhole} connecting two boundary points, the length of the straight line in $E$ connecting $B_1$ and $B_2$ can serve as a lower bound of the geodesic distance on the full surface between the points.
Geodesic distance measures the distance
of $P$ and $Q$ to boundary points $B_1$ and $B_2$, whereas the Euclidean distance $d_E(B_1, B_2)$ bounds from below the geodesic distance on the full surface between $B_1$ and $B_2$. 
Our {\it wormhole} criterion $\mathcal{C}_{\mathcal{W}}$ is given by
\begin{equation}
    \mathcal{C}_{\mathcal{W}}(P_\mathcal{Y}, Q_\mathcal{Y}) = 
    \begin{cases}
        1 &\text{if} \;\; d_{\mathcal{Y}'}(P_\mathcal{Y},Q_\mathcal{Y}) \le \min\limits_{B_1,B_2\in \mathcal{B}} d_{\mathcal{Y}'}(P_\mathcal{Y}, B_1) + d_{\mathcal{Y}'}(Q_\mathcal{Y}, B_2) + d_E(B_1, B_2),\\
        0 &\text{otherwise}.
    \end{cases}
    \label{equation: euclidean bound}
\end{equation}
By design, this criterion can be used to find consistent pairs.

\begin{theorem}[$\mathcal{C}_{\mathcal{W}}$ guarantees]
    \label{th: euclidean guarantee}
    The wormhole criterion $\mathcal{C}_{\mathcal{W}}$ yields guaranteed pairs.
\end{theorem}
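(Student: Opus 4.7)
The plan is to show that if $\mathcal{C}_{\mathcal{W}}(P_\mathcal{Y},Q_\mathcal{Y})=1$, then the geodesic distance on the full surface equals the one on the partial surface. The standard inequality $d_\mathcal{Y}(P_\mathcal{Y},Q_\mathcal{Y}) \le d_{\mathcal{Y}'}(P_\mathcal{Y},Q_\mathcal{Y})$ holds because any admissible path on $\mathcal{Y}'\subset\mathcal{Y}$ is also admissible on $\mathcal{Y}$. The real work is establishing the reverse inequality assuming the criterion.

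First I would split into two cases depending on whether the minimal geodesic $\gamma^\star$ on $\mathcal{Y}$ between $P_\mathcal{Y}$ and $Q_\mathcal{Y}$ lies entirely inside $\mathcal{Y}'$ or not. In the first case, $\gamma^\star$ is also admissible on $\mathcal{Y}'$, so $d_{\mathcal{Y}'}(P_\mathcal{Y},Q_\mathcal{Y})\le \mathrm{length}(\gamma^\star)=d_\mathcal{Y}(P_\mathcal{Y},Q_\mathcal{Y})$ and we are done. In the second case, $\gamma^\star$ must leave $\mathcal{Y}'$ through the boundary and later re-enter it; by continuity of $\gamma^\star$ and the fact that $\mathcal{B}=\partial\mathcal{Y}'$, define $B_1$ as the first exit point on $\mathcal{B}$ and $B_2$ as the last re-entry point on $\mathcal{B}$ along $\gamma^\star$. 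Writing $\gamma^\star$ as the concatenation of three subpaths $\gamma_1$ (from $P_\mathcal{Y}$ to $B_1$), $\gamma_0$ (from $B_1$ to $B_2$), and $\gamma_2$ (from $B_2$ to $Q_\mathcal{Y}$), the first and third lie in $\overline{\mathcal{Y}'}$, so their lengths bound the corresponding geodesics on $\mathcal{Y}'$ from above: $\mathrm{length}(\gamma_1)\ge d_{\mathcal{Y}'}(P_\mathcal{Y},B_1)$ and $\mathrm{length}(\gamma_2)\ge d_{\mathcal{Y}'}(B_2,Q_\mathcal{Y})$.

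Next I would bound the middle piece $\gamma_0$ using \Cref{th: euclidean bound}: since $\gamma_0$ is a curve on $\mathcal{Y}$ between $B_1$ and $B_2$, its length is at least $d_\mathcal{Y}(B_1,B_2)\ge d_E(B_1,B_2)=\lVert B_1-B_2\rVert_2$. Summing the three bounds,
\begin{equation*}
d_\mathcal{Y}(P_\mathcal{Y},Q_\mathcal{Y}) = \mathrm{length}(\gamma^\star) \;\ge\; d_{\mathcal{Y}'}(P_\mathcal{Y},B_1) + d_E(B_1,B_2) + d_{\mathcal{Y}'}(B_2,Q_\mathcal{Y}),
\end{equation*}
and minimizing the right-hand side over all $B_1,B_2\in\mathcal{B}$ only lowers it further:
\begin{equation*}
d_\mathcal{Y}(P_\mathcal{Y},Q_\mathcal{Y}) \;\ge\; \min_{B_1,B_2\in\mathcal{B}} d_{\mathcal{Y}'}(P_\mathcal{Y},B_1)+d_{\mathcal{Y}'}(Q_\mathcal{Y},B_2)+d_E(B_1,B_2).
\end{equation*}

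Finally I would combine this with the hypothesis $\mathcal{C}_{\mathcal{W}}(P_\mathcal{Y},Q_\mathcal{Y})=1$, which states exactly that $d_{\mathcal{Y}'}(P_\mathcal{Y},Q_\mathcal{Y})$ does not exceed that minimum. Chaining the inequalities yields $d_{\mathcal{Y}'}(P_\mathcal{Y},Q_\mathcal{Y}) \le d_\mathcal{Y}(P_\mathcal{Y},Q_\mathcal{Y}) \le d_{\mathcal{Y}'}(P_\mathcal{Y},Q_\mathcal{Y})$, forcing equality and giving the consistency of the pair. The main technical obstacle is the boundary-crossing argument: one must justify that a minimal geodesic escaping $\mathcal{Y}'$ really does cross $\mathcal{B}$ at well-defined first and last points, which relies on $\mathcal{B}$ being closed and $\gamma^\star$ being a continuous curve so that the set of parameters where $\gamma^\star\in \mathcal{Y}'$ is relatively closed; the infimum and supremum of its complement then give the crossings $B_1$ and $B_2$. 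The rest of the proof is purely a chain of inequalities using \Cref{th: euclidean bound} and the inclusion $\mathcal{Y}'\subset\mathcal{Y}$.
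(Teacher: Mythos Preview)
Your argument is correct and uses the same key ingredient as the paper (the Euclidean lower bound of \cref{th: euclidean bound}), but the case decomposition is genuinely different. The paper fixes boundary points $B_1,B_2$ and splits according to whether the geodesics on $\mathcal{Y}$ from $P_\mathcal{Y}$ to $B_1$ and from $Q_\mathcal{Y}$ to $B_2$ themselves cross $\mathcal{B}$; when they do, it replaces $B_1,B_2$ by the first crossings $B_0,B_3$ and reduces to the first case. You instead analyze the object one actually cares about, the minimal geodesic $\gamma^\star$ on $\mathcal{Y}$ between $P_\mathcal{Y}$ and $Q_\mathcal{Y}$, and split according to whether it stays in $\mathcal{Y}'$; in the nontrivial case you read off $B_1,B_2$ directly as the first exit and last re-entry of $\gamma^\star$. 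Your route is more direct and makes the logical flow cleaner: it yields immediately the inequality $d_\mathcal{Y}(P_\mathcal{Y},Q_\mathcal{Y})\ge \min_{B_1,B_2}[\cdots]$, which together with the criterion hypothesis closes the sandwich. The paper's route has the advantage of not needing to invoke continuity of $\gamma^\star$ to justify well-defined first and last crossings, trading that for a reduction argument on the boundary points. Both are valid; yours is the more standard metric-geometry argument.
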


\begin{figure}[htbp]
    \centering
     \includegraphics[width=0.9\textwidth]{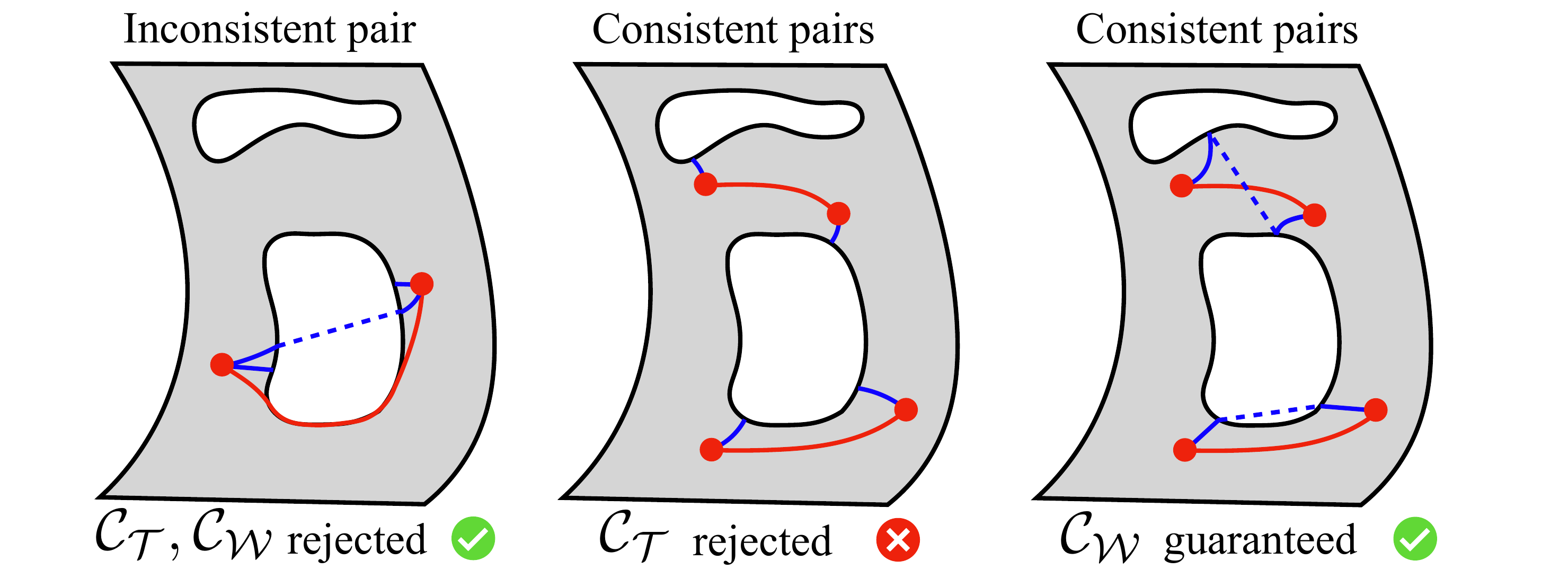}
    \caption{
    Example of inconsistent and consistent pairs of points. 
    The minimal geodesic paths are colored red, while   paths to the boundary points are colored blue.
    Different boundary points are selected in  $\mathcal{C}_{\mathcal{T}}$ \cite{bronstein2006efficient,rosman2008topologically} and  $\mathcal{C}_{\mathcal{W}}$. 
    The Euclidean lines connecting the boundary points selected by $\mathcal{C}_{\mathcal{W}}$ are dashed blue. 
      Both criteria correctly reject inconsistent pairs (left). 
    Since $\mathcal{C}_{\mathcal{T}}$ ignores the distance between boundary points, it discards many consistent pairs (middle). 
    Criterion $\mathcal{C}_{\mathcal{W}}$ finds more consistent pairs by including the extrinsic Euclidean distance between boundary points (right). 
    }
    \label{fig: our vs rosman sketch}
\end{figure}
\begin{figure}[htbp]
    \centering
     % \frame{
     \adjincludegraphics[width=0.95\textwidth,
     trim={{0.03\width} {.22\height} {0.01\width} {.1\height}},clip
     ]{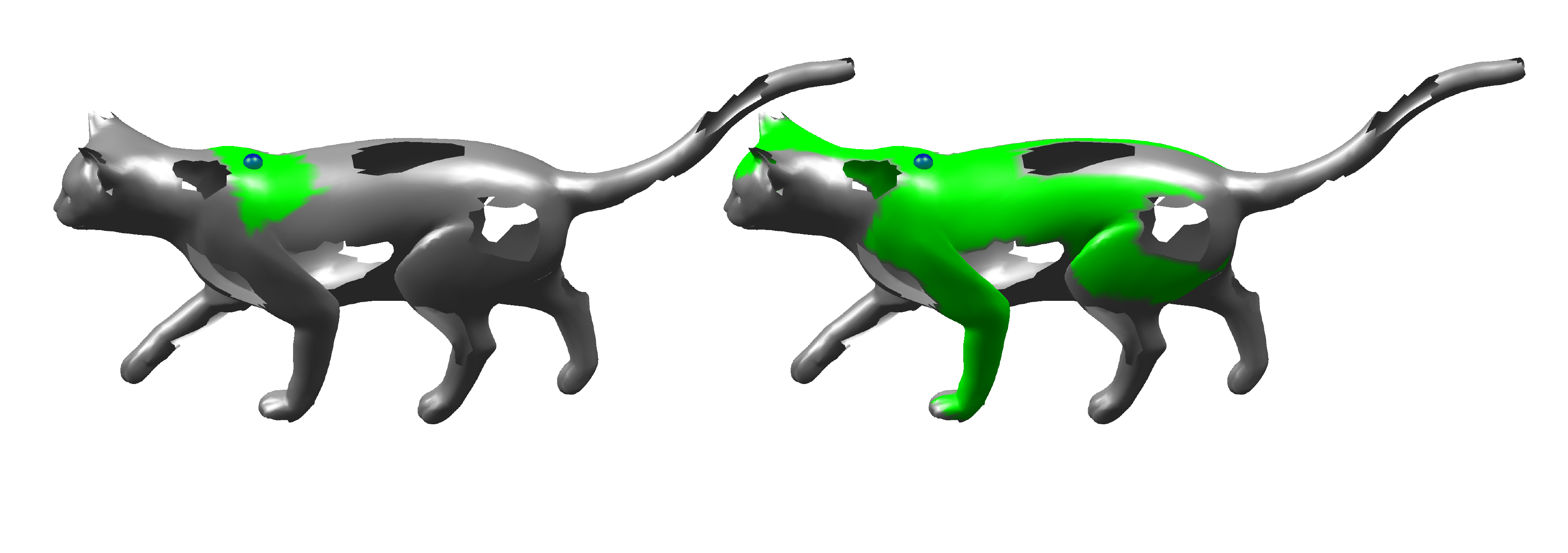}
     % } 
    \caption{We plot in green the set of points that together with the blue point satisfy $\mathcal{C}_{\mathcal{T}}$ \cite{bronstein2006efficient,rosman2008topologically}  (left) and $\mathcal{C}_{\mathcal{W}}$ (right). 
    The blue point and any of the green ones form a guaranteed pairs. 
    The cat surface is taken from the SHREC'16 HOLES dataset \cite{cosmo2016shrec}.
    }
    \label{fig: our vs rosman on cat}
\end{figure}

See \cref{proof th euclidean guarantee} for a proof.
The new criterion for finding consistent pairs, generalizes the best known criterion $\mathcal{C}_{\mathcal{T}}$ as all $\mathcal{C}_{\mathcal{T}}$-guaranteed pairs are also $\mathcal{C}_{\mathcal{W}}$-guaranteed, see, e.g.  \cref{fig: our vs rosman on cat}.

So far, we implicitly assumed that the metric on the manifold is the one induced from the Euclidean metric of the embedding space, meaning that curves have the same length from the perspective of the manifold and the embedding space.
We can generalize the wormhole criterion to handle general Riemannian metrics on the manifold, by modulating the Euclidean distance of the straight line between boundary points by the minimal scaling relation between arclength measured by the embedding metric restricted to the manifold and one measured by the Riemannian metric of the embedded manifold. 
We present our generalised metric-sensitive criterion in \cref{sec: sup mat Generalisation to non-standard metrics}

The definition of consistent pairs has been in a continuous setting. 
We need to adapt it to the discrete world. 
We describe in \cref{sec: sup mat Discretization} how to do so. 
In short, we only consider points on the surface or its boundary to belong to a finite sampled set $V$ of vertices on the continuous surface.
We construct the threshold and binary mask matrices $\boldsymbol{K}$ and $\boldsymbol{M}$ of size $|V|\times |V|$, containing at entry $ij$ respectively the threshold of the criterion $\mathcal{C}_{\mathcal{W}}$  and the binary criterion value $\mathcal{C}_{\mathcal{W}}$ of the $ij$ pair of vertices,
\begin{equation}
    \boldsymbol{K}_{ij} = \min\limits_{B_1,B_2\in\mathcal{B}} d_{\mathcal{Y}'}(v_i, B_1) + d_{\mathcal{Y'}}(v_j, B_2) + d_E(B_1,B_2)
\end{equation}
and
\begin{equation}
    \boldsymbol{M}_{ij} = \mathbbm{1}_{(\boldsymbol{D}_{\mathcal{Y'}})_{ij} \le \boldsymbol{K}_{ij}},
\end{equation}
where $\mathbbm{1}$ is the indicator function and $( \boldsymbol D_{\mathcal{Y}'})_{ij}$ is the computed geodesic distance on the partial surface $\mathcal{Y}'$ between $v_i$ and $v_j$.

To evaluate the quality of our criterion $\mathcal{C}_\mathcal{W}$, we searched empirically for consistent pairs using either our criterion or $\mathcal{C}_\mathcal{T}$ on the PFAUST-M and PFAUST-H \cite{bracha2023partial} datasets, the latter comprising of shapes with more holes than the former. 
We provide in \cref{tab: percentage consistent guaranteed pairs} the percentage of consistent pairs, and among those the percentage of guaranteed pairs by the different criteria, along with the standard deviations.
Our criterion $\mathcal{C}_\mathcal{W}$ is able to recover most pairs and it guarantees twice as many as the previous criterion $\mathcal{C}_\mathcal{T}$.

\begin{table}[htbp] 
\caption{
Empirical quantitative evaluation of the ability to recover consistent pairs. The numbers represent either the average percentage of consistent pairs out of the total number of pairs of points in a discrete shape, or the average percentage of guaranteed pairs by criteria $\mathcal{C}_\mathcal{T}$ and $\mathcal{C}_\mathcal{W}$ out of the number of consistent pairs. In parenthesis we provide the standard deviation across different shapes. Averages and standard deviations are computed within the PFAUST-M or PFAUST-H \cite{bracha2023partial} datasets, the latter possessing shapes with more holes.
}
\label{tab: percentage consistent guaranteed pairs}
\centering
%\resizebox{0.75\columnwidth}{!}{
    \begin{tabular}{c c c c}
    \toprule
    Dataset & \%Consistent & \%Guaranteed ($\mathcal{C}_\mathcal{T}$) \cite{bronstein2006efficient,rosman2008topologically} & \%Guaranteed ($\mathcal{C}_\mathcal{W}$) (Ours)\\
     \midrule
     PFAUST-M & 78 ($\pm 16$) & 48 ($\pm 18$) & \textbf{82} ($\pm 14$)\\
     PFAUST-H & 53 ($\pm 16$) & 30 ($\pm 18$) & \textbf{65} ($\pm 18$)\\
    \bottomrule
    \end{tabular}
%}
\end{table}

%_________________________________

\section{Applications}

\subsection{Multi-Dimensional Scaling}

In this paper, we concentrate on partial shape matching. 
Additionally, we demonstrate that the proposed criterion for identifying consistent pairs can be beneficial for other tasks.
Specifically, we show how it can be incorporated into a multidimensional scaling (MDS) pipeline. 
The MDS goal is to embed curved manifolds into a low dimensional Euclidean space $\mathbb{R}^m$, for some small $m$, such that the pairwise Euclidean distances of the new embedding are as close as possible to the original geodesic distances between corresponding pairs of points. 
To obtain embeddings robust to missing parts, we handle boundary conditions on partial surfaces by relying on consistent distances of guaranteed pairs.
In \cite{rosman2008topologically,rosman2010nonlinear}, the TCIE method minimises a masked distance preservation objective, with mask weights $w_{ij} = \mathcal{C}_{\mathcal{T}}(P_i, P_j)$. 
Our MDS method, named {\it wormhole constrained isometric embedding} (WHCIE), adapts this approach by replacing the mask weights with our criterion $w_{ij} = \mathcal{C}_{\mathcal{W}}(P_i, P_j) = \boldsymbol{M}_{ij}$.

We plot the resulting WHCIE planar embedding of toy surfaces in \cref{fig: mds main roll 1.0}. 
For each Swiss roll, we removed either a hole or a full rectangular cut connected to the boundary. 
We compare with several reference manifold learning techniques: Isomap \cite{tenenbaum2000global}, MLLE \cite{zhang2006mlle}, Laplacian eigenmaps \cite{belkin2003laplacian}, UMAP \cite{mcinnes2018umap}, and TCIE \cite{rosman2008topologically,rosman2010nonlinear}.
Methods that do not check for distance consistency between pairs of points yield distorted embeddings. 
The wormhole criterion finds significantly more consistent pairs compared to the TCIE, resulting in an embedding similar to that of the actual surface.
That is, a flat rectangle with a hole or a cut. 
For more details and comparison with related methods, see \cref{sec: sup mat Multi-dimensional scaling}.

\begin{figure}[htbp]
    \centering
    \includegraphics[width=1.0\textwidth]{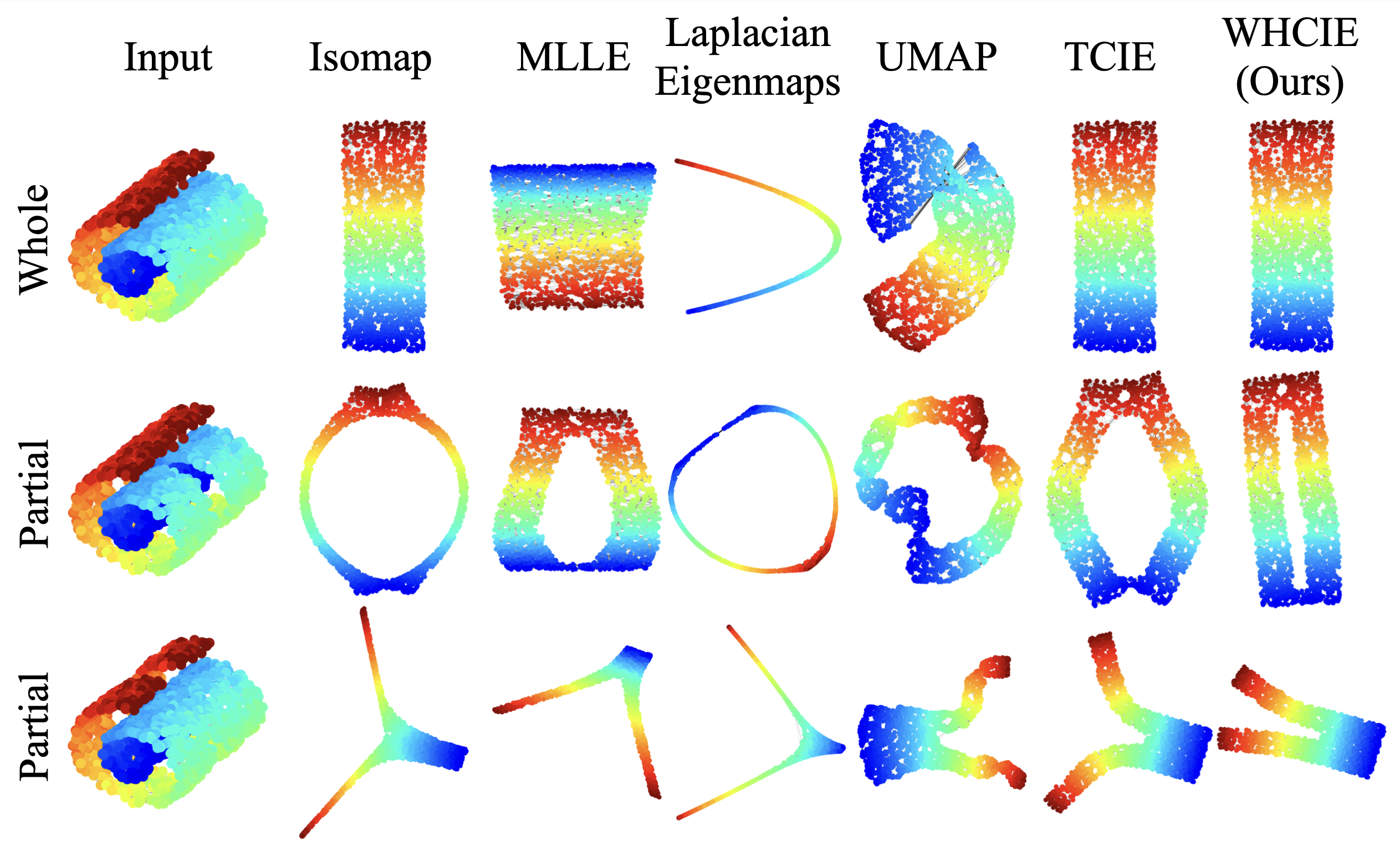}
    \caption{
    MDS embedding of various manifold learning methods on a whole Swiss roll (top) and on two versions of it with Gaussian noise, having either a rectangular hole (middle) or a rectangular cut (bottom). 
    Other methods are referred to in \cref{fig: mds all less stretch refRoll single page no noise full full hole broken,fig: mds all less stretch refRoll single page small noise full full hole broken} in \cref{sec: sup mat Multi-dimensional scaling}. 
    }
    \label{fig: mds main roll 1.0}
\end{figure}

%__________________________________
\subsection{Partial Shape Matching}
We next return to our primary application - matching shapes. 
To that end, we introduce the {\it wormhole loss,} 
a method based on guaranteed pairs specifically designed for partial shape matching.

\subsubsection{Wormhole Loss: Unsupervised Partial Shape Matching using Consistent Pairs}

Our mask $\boldsymbol{M}$ and threshold $\boldsymbol{K}$ matrices operate on minimal geodesic distances. 
A relevant unsupervised loss function using such distances for near isometric surfaces $\mathcal{X}$ and $\mathcal{Y}$ is \cite{halimi2019unsupervised},
\begin{equation}
    \mathcal{L}_{\text{geo}}(\boldsymbol{P},\boldsymbol{D}_{\mathcal{X}},\boldsymbol{D}_{\mathcal{Y}}) = \frac{1}{|\mathcal{Y}|^2}\norm{\boldsymbol{PD}_\mathcal{X}\boldsymbol{P}^\top - \boldsymbol{D}_{\mathcal{Y}}}_F^2,
    \label{loss:geodesic loss function}
\end{equation}
where $\norm{.}_F$ is the Frobenius norm, $\boldsymbol{P}$ is the estimated point-to-point soft correspondence matrix, $\boldsymbol{D}_\mathcal{X}$ is the geodesic distance matrix on surface $\mathcal{X}$, and $|\mathcal{Y}|$ is the area of surface $\mathcal{Y}$.
The stochastic matrix $\boldsymbol{P}$ of size ${|V_\mathcal{Y}|\times |V_\mathcal{X}|}$ takes values in $  \boldsymbol{P}_{ij} \in [0,1]$, and is row-normalized such that $\sum_i \boldsymbol{P}_{ij} = 1$. 
This continuous setting accounts for possible different samplings of the surfaces, as the matched part in $\mathcal{X}$ of each sampled vertex in $V_\mathcal{Y}$ may not have a corresponding sample point in $V_\mathcal{X}$.
If $\boldsymbol{P}$ is estimated to be the correct correspondence matrix, then, the distances are preserved and the loss is minimal.
We incorporate our mask $\boldsymbol{M}$ into this loss function for partial surfaces $\mathcal{Y}'$ as follows,
\begin{eqnarray}
    \label{loss: partial shape loss function}
    \mathcal{L}_{\text{geo}}(\boldsymbol{P},\boldsymbol{D}_{\mathcal{X}},\boldsymbol{D}_{\mathcal{Y}'},\boldsymbol{M}) &=&  \sum_{ij}\boldsymbol{M}_{ij}\boldsymbol{A}_{\mathcal{Y}'ii}\boldsymbol{A}_{\mathcal{Y}'jj}  ({\boldsymbol{PD}_\mathcal{X}\boldsymbol{P}^\top - \boldsymbol{D}_{\mathcal{Y}'}})^2_{ij},
\end{eqnarray}
where $\boldsymbol{P}$, now a matrix of size ${|V_\mathcal{X}|\times|V_{\mathcal{Y}'}|}$ is a full-to-partial correspondence matrix, and $\boldsymbol{A}_{\mathcal{Y}'}$ is diagonal matrix of the vertex areas of shape $\mathcal{Y}'$.
Thus, $\boldsymbol{PD}_\mathcal{X}\boldsymbol{P}^\top$ of size ${|V_{\mathcal{Y}'|}\times |V_{\mathcal{Y}'}|}$ maps  the distances between pairs of vertices in $\mathcal{X}$ to distances between pairs of vertices in $\mathcal{Y}'$. 
The mask matrix $\boldsymbol{M}$ filters out inconsistent pairs, allowing the loss to be minimized with  distances transferred from the full surface matched with those on the partial surface.
This loss generalises the one proposed in \cite{bracha2023partial} by incorporating masking weights. We provide full details on how to derive this loss in the supplementary material \cref{sec: sup mat Derivation of the Masked Geodesic Distance Preservation Loss}.

In our experiments, we found that the binary criterion can be relaxed to better use the interaction between pairs. 
Many inconsistent pairs have distances only slightly increased by the presence of holes, thus, filtering them out removes valuable, though slightly noisy, information. 
We propose to regularize the binary mask matrix into a soft mask $\boldsymbol{M}^s$ as follows,
\begin{eqnarray}
    \boldsymbol{M}^s_{ij} &=& \min\left (\frac{\boldsymbol K_{ij}}{(\boldsymbol{D}_{\mathcal{Y}'})_{ij}},1\right).
\end{eqnarray}
The weights of the soft mask matrix inhibit the contribution to the loss of non-guaranteed pairs whose distance is significantly larger than the theoretical threshold of our criterion $\mathcal{C}_{\mathcal{W}}$. 
By the same token, it preserves the influence of distances of non-guaranteed pairs which are near the threshold. 

%_____________________________________________
\subsubsection{Implementation Considerations}
We evaluate the new geometric-wormhole loss, $\mathcal{L}_{\text{geo}}$, by integrating it into a pipeline adapted from \cite{bracha2023partial}, replacing their primary loss component, first introduced in \cite{halimi2019unsupervised}. 
Our pipeline takes raw input features and then refines them independently for the surfaces using a shared-weights neural network, DiffusionNet \cite{sharp2022diffusionnet}.
These refined features are then matched using Softmax similarity \cite{bensaid2023partialpiecewise,cao2023unsupervised} to directly compute a correspondence matrix $\boldsymbol{P}$, which is then plugged into the geometric-wormhole loss. 
We keep the regularization loss \cite{roufosse2019unsupervised} promoting the preservation of vertex area between corresponding vertices, which is computed with the functional map extracted from the correspondence matrix. 
We provide further details of the proposed pipeline and implementation details in the supplementary material \cref{sec: sup mat Detailed implementation considerations}.

%____________________________
\subsubsection{Evaluation}
\begin{table}[htbp]
\caption{
Quantitative analysis on SHREC'16 \cite{cosmo2016shrec}.
The numbers represent the average geodesic error (multiplied by 100) of the results following post-processing refinement. The best performance is highlighted in bold, and the second best is underlined. These results underscore the proposed method's dominance in unsupervised shape matching. It surpasses both supervised, unsupervised, and pretrained methods on the HOLES benchmark, known for its highly challenging scenarii.
}
\label{tab:result_table_shrec}
\centering
\resizebox{0.95\textwidth}{!}{%
    \begin{tabular}{@{}c c c c c@{}}
    \toprule
     Test-set & \multicolumn{2}{c }{\text{CUTS}} & \multicolumn{2}{c}{\text{HOLES}} \\
     \toprule
    Training-set  & \text{CUTS} & \text{HOLES}  & \text{CUTS} & \text{HOLES} \\
    \midrule
    \multicolumn{5}{c}{Axiomatic methods}\\
    \midrule
    PFM \cite{pfm}$\rightarrow$Zoomout & \multicolumn{2}{c}{9.7 $\rightarrow$ 9.0}  &  \multicolumn{2}{c}{23.2 $\rightarrow$ 22.4} \\
    FSP \cite{fsp} $\rightarrow$ Zoomout &  \multicolumn{2}{c}{16.1 $\rightarrow$ 15.2} & \multicolumn{2}{c}{33.7 $\rightarrow$ 32.7} \\
    \midrule
    \multicolumn{5}{c}{Supervised methods}\\
    \midrule
    GeomFMaps \cite{geomfmnet} $\rightarrow$ Zoomout  &  12.8 $\rightarrow$ 10.4 & 19.8 $\rightarrow$ 16.7 &  20.6 $\rightarrow$ 17.4 & 15.3 $\rightarrow$ 13.0 \\   
    DPFM \cite{DPFM} $\rightarrow$ Zoomout &  3.2 $\rightarrow$ 1.8 & 8.6 $\rightarrow$ 7.4 &  15.8 $\rightarrow$ 13.9 & 13.1 $\rightarrow$ 11.9\\   
    \midrule
    \multicolumn{5}{c}{Pretrained using external datasets}\\
    \midrule
    RobustFMnet with Refinement \cite{cao2023unsupervised} &  3.2 &  5.6 &  13.5  &  8.2\\
    \midrule
    \multicolumn{5}{c}{Unsupervised methods}\\
    \midrule
    Unsupervised-DPFM \cite{DPFM} $\rightarrow$ Zoomout  &  11.8 $\rightarrow$ 12.8 & 19.5 $\rightarrow$ 18.7 &  19.1 $\rightarrow$ 18.3 & 17.5 $\rightarrow$ 16.2 \\
    RobustFMnet \cite{cao2023unsupervised} $\rightarrow$ Refinement &  16.9 $\rightarrow$10.6& 22.7$\rightarrow$16.6 &  18.7 $\rightarrow$ 16.2 & 23.5 $\rightarrow$ 18.8\\
     DirectMatchNet \cite{bracha2023partial} $\rightarrow$ Refinement  &  \textbf{6.9} $\rightarrow$ 5.6 & 12.2 $\rightarrow$ 8.0  &  \textbf{14.2} $\rightarrow$ \textbf{10.2} & \underline{11.4} $\rightarrow$ \underline{7.9} \\
    DirectMatchNet LPF \cite{bracha2023partial} $\rightarrow$ Refinement &  7.1 $\rightarrow$  \underline{4.7} & \textbf{ 8.6} $\rightarrow$ \textbf{5.5}  &  16.4 $\rightarrow$ 11.6 & 12.3 $\rightarrow$ 8.6 \\
    Wormhole (Ours) $\rightarrow$ Refinement  &  \textbf{6.9 $\rightarrow$ 4.3} & \underline{10.8} $\rightarrow$ \underline{7.2}  &  17.1 $\rightarrow$ \underline{10.9} & \textbf{10.9 $\rightarrow$ 6.6} \\
    \bottomrule
    \end{tabular}
}
\end{table}

\textbf{Datasets.} 
We evaluate our method on the benchmarks SHREC'16 \cite{cosmo2016shrec} and PFAUST \cite{bracha2023partial}. SHREC'16 includes two datasets, CUTS and HOLES. Both datasets contain processed figures from the TOSCA \cite{TOSCA} dataset. In CUTS, the figures were cut using several 3D planes. In HOLES, besides the cuts made by the planes, additional holes were added, resulting in partial surfaces with more topological changes and a longer boundary, which makes this dataset particularly harder for shape correspondence. The second benchmark on which we evaluate our method is PFAUST, recently created by \cite{bracha2023partial}. It contains figures from the FAUST-Remeshed
dataset \cite{faustRemeshed} that were processed by creating holes in them. This benchmark is also divided into two datasets; in PFAUST-M, there are bigger but fewer holes, and in the PFAUST-H,
there are smaller but more numerous holes.
The latter presents a harder task for shape correspondence, as a greater number of holes results in more significant changes in the topology.

\textbf{Baselines.} 
The baseline methods we evaluated fall into three categories: Axiomatic methods -- PFM \cite{pfm} and FSP \cite{fsp}, supervised methods -- GeomFMaps \cite{geomfmnet} and DPFM \cite{DPFM}, which is the current SOTA for the CUTS and PFAUST datasets, and unsupervised methods -- unsupervised DPFM \cite{DPFM}, RobustFmaps \cite{cao2023unsupervised}, and DirectMatchNet \cite{bracha2023partial}, which is the current SOTA for the HOLES and the unsupervised SOTA on the PFAUST datasets. 
We had to re-implement the loss function for unsupervised DPFM due to source code unavailability. 
Recently, the authors of RobustFmaps updated their models to include a pretraining phase including four external datasets.
For completeness, we include this pretrained version in our results table, albeit we separate it from the other methods not using any external datasets. For fairness, we also provide the results of RobustFmaps when trained only on HOLES or CUTS without any pretraining,
following the common protocol of the benchmarks.
For post-processing refinement, we employed either Zoom-out \cite{zoomout} or test time adaptation refinement \cite{cao2023unsupervised} depending on what the original papers used.

\textbf{Results.} The quantitative results shown in \cref{tab:result_table_shrec,tab:result_table pfaustrm} indicate that our method reaches state-of-the-art performance for unsupervised partial shape correspondence. 
We also display superior qualitative performance in 
\cref{fig:qual_pfaust main} and \cref{sec: sup correspondence result},
demonstrating our method's robustness to missing parts. Indeed, unlike other methods our mappings are hardly distorted at challenging locations, such as points close to boundaries.
A major improvement over other approaches was in the more challenging datasets HOLES, PFAUST-H, and PFAUST-M, whereas in the CUTS dataset a relatively modest improvement is shown.
The missing parts in the CUTS dataset were created by slicing figures with 3D planes. 
In practice, this procedure does not modify much the topology of the surface and it does not introduce many inconsistent pairs. 
As such, cuts mostly preserve all geodesic distances, thus, our novel loss function was almost identical to the one in 
\cref{loss:geodesic loss function} \cite{halimi2019unsupervised}.
On the more challenging datasets, a larger part of the inter-geodesic distances differ between the partial and full surfaces, 
enabling our method to demonstrate its strength, even in comparison to supervised methods or pretrained models on external datasets.
In fact, our unsupervised approach demonstrates superior performance to the best supervised approaches on the HOLES and PFAUST-H benchmarks.
\begin{figure}[htbp]
    \centering
    \adjincludegraphics[width=1\textwidth,
    trim={{0.025\width} {0.07\height} {0.018\width} {0.285\height}}
    ,clip]{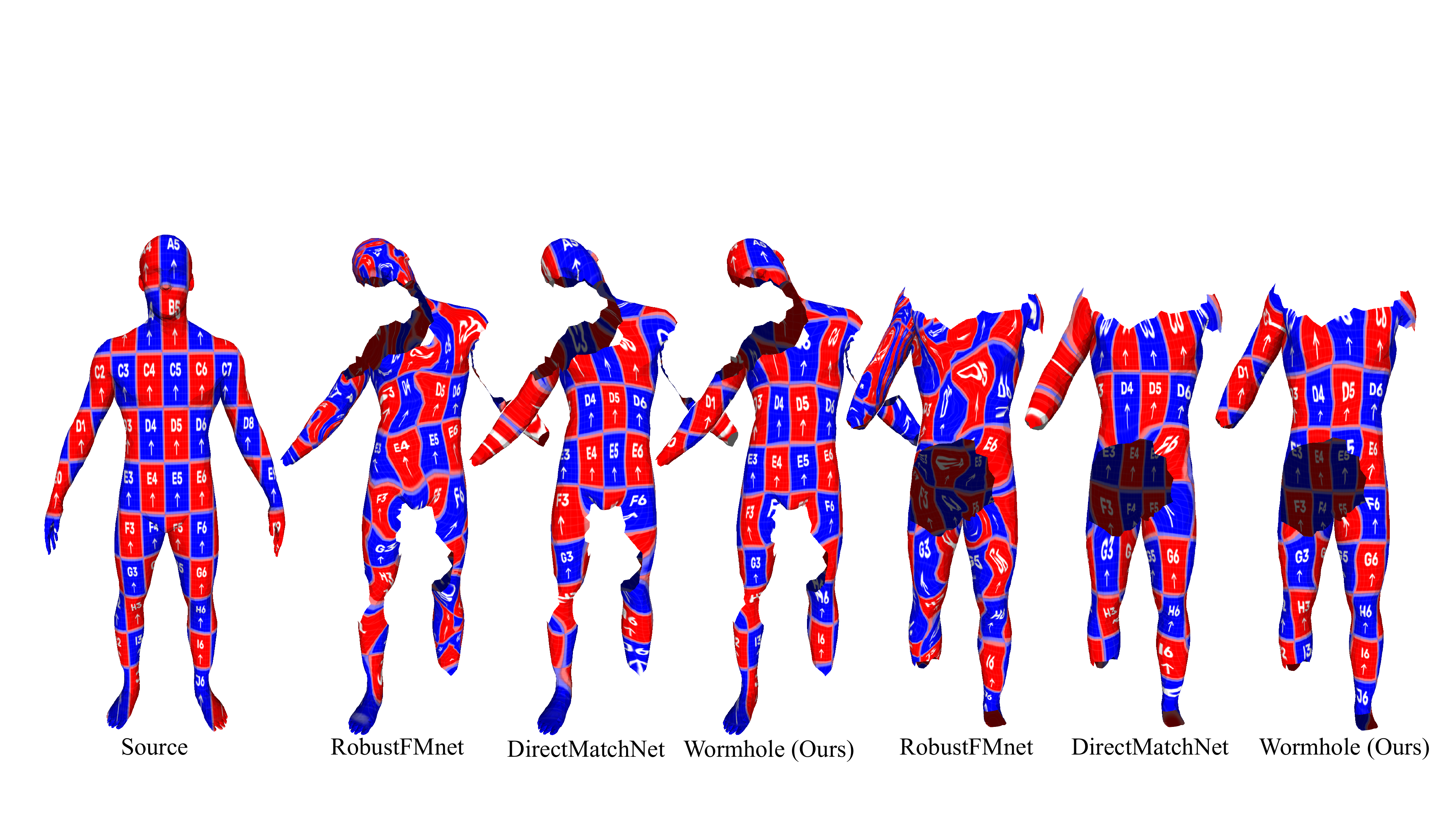}
    \caption{
    Qualitative results on the test set of PFAUST-H. 
    Our method yields less distortions near boundaries demonstrating its robustness to missing parts.
     }
    \label{fig:qual_pfaust main}
\end{figure}

\begin{table}[htbp] 
\caption{
Quantitative results on PFAUST \cite{bracha2023partial}.
The numbers indicate average geodesic error ($\times 100$). 
Our method surpasses previous unsupervised shape correspondence methods on the medium and hard datasets, and is on par with the supervised method on the hard dataset. 
The largest improvement is on the hard dataset, showing robustness to topological changes due to missing parts of the surface. 
}
\label{tab:result_table pfaustrm}
\centering
\resizebox{0.75\columnwidth}{!}{
    \begin{tabular}{c c c c}
    \toprule
     \multicolumn{2}{c}{} & \text{PFAUST-M} & \text{PFAUST-H} \\
     \midrule
     Supervised & DPFM \cite{DPFM} & 3.0 & 6.8 \\
     \midrule
     \multirow{4}{*}{Unsupervised} & Unsupervised-DPFM \cite{DPFM} & 9.3 & 12.7 \\
     & RobustFMnet \cite{cao2023unsupervised} & 7.9 & 12.4 \\
     & DirectMatchNet \cite{bracha2023partial} & 5.1 & 7.9 \\
     & Wormhole (Ours)  & \textbf{4.6} & \textbf{6.7} \\
    \bottomrule
    \end{tabular}
}
\end{table}

\textbf{Ablation study.} 
To evaluate which criterion, ours $\mathcal{C}_{\mathcal{W}}$ or the more conservative $\mathcal{C}_{\mathcal{T}}$ \cite{bronstein2006efficient,rosman2008topologically}, leads to better masks for partial shape correspondence, we compare masking strategies on the HOLES dataset in the following setting: 
Binary mask from $\mathcal{C}_{\mathcal{T}}$ \cite{bronstein2006efficient,rosman2008topologically}, non-binary mask from $\mathcal{C}_{\mathcal{T}}$  \cite{bronstein2006efficient,rosman2008topologically}, our binary mask, and our non-binary mask. 
As can be seen in \cref{tab:ablation_table}, learning with masks derived from our wormhole criterion surpass the alternatives for partial shape correspondence, showing the importance of less restrictive criteria to find consistent distances between pairs of points.

\begin{table}[htbp] 
\caption{ 
Ablation study on the criterion used to derive the mask in the proposed loss. 
We compare the criterion $\mathcal{C}_{\mathcal{T}}$ \cite{bronstein2006efficient,rosman2008topologically} with the new wormhole $\mathcal{C}_{\mathcal{W}}$. 
Training and testing were conducted on the HOLES dataset. 
Numbers represent the average geodesic error ($\times 100$). 
}
\label{tab:ablation_table}
\centering
\resizebox{0.7\columnwidth}{!}{
    \begin{tabular}{cccccc}
    \toprule
      Mask type & \multicolumn{2}{c}{Binary} & & \multicolumn{2}{c}{Non-binary}\\
      \cmidrule{2-3} \cmidrule{5-6}
      from &
      $\mathcal{C}_{\mathcal{T}}$  \cite{bronstein2006efficient,rosman2008topologically} &
      $\mathcal{C}_{\mathcal{W}}$ (Ours) & &
      $\mathcal{C}_{\mathcal{T}}$ \cite{bronstein2006efficient,rosman2008topologically} &
      $\mathcal{C}_{\mathcal{W}}$ (Ours)
      \\
     \midrule
      \text{HOLES} & 18.8 & \textbf{15.3} & & 14.7 & \textbf{10.9} \\
    \bottomrule
    \end{tabular}
}
\end{table}

%_____________________________
\section{Conclusion}
In this paper, we analyzed invariant properties of geodesic distances between partial and full surfaces.
Following it we developed an improved criterion for identifying \textit{consistent} pairs -- pairs where the geodesic distance remains consistent between the partial and full surfaces.
Our improved criterion finds consistent pairs, or \textit{guarantees} them, if the distance between two points on the partial surface is, for all couple of boundary points, smaller than the the sum of distances to the boundary points and of the distance in the embedding space between these boundary points, which is given by the straight line for Euclidean embeddings.
We used this criterion to tackle the complex challenge of partial shape correspondence, by incorporating our guaranteed pairs in a novel unsupervised loss function specially designed to handle partial surfaces.
With this loss function, our method demonstrated SOTA results on reference partial shape correspondence benchmarks, namely the SHREC'16 CUTS and HOLES and the recent PFAUST dataset.

\noindent
\textbf{Limitations.}
Our paper presents a novel criterion for finding consistent pairs in partial surfaces. 
However, it is still not able to recover all consistent pairs in general surfaces, meaning that some information that would be useful for partial shape matching is discarded. 
We suspect there could exist even less restrictive criteria for finding consistent geodesic distances in the presence of holes and cuts. 
The exploration of such criteria is left for future research. 
Additionally, finding boundaries in higher dimensions should be addressed with care, and operating in the embedding space in higher dimensions would not be as forgiving as what we have done for 2D surfaces in $\mathbb{R}^3$.
In this work, we focused on partial-to-full shape matching. Although the wormhole criterion can be used to find consistent pairs on each shape independently, we would also need to find the shared consistent pairs for matching the shapes. Designing a full pipeline for partial-to-partial shape matching with our criterion is left for future work.

%\newpage

\bibliographystyle{abbrv}
\bibliography{biblio}

\clearpage
\newpage

\appendix

\section{Further Theoretical Details and Proofs}

\subsection{Proof of \cref{th: euclidean bound}}
\label{sec: proof th euclidean bound}

\begin{proof}
    The geodesic curve between the two points is a curve (in the embedding space) between the points restricted to the manifold $\mathcal{Y}$. 
    The length of this curve is at least as long as that of the minimal geodesic (line in the Euclidean case) connecting the points in the embedding space. 
    This trivial result holds since the metric on the manifold is the one induced by the metric of the (Euclidean) embedding space restricted to the embedded manifold.
\end{proof}

\subsection{Proof of \cref{th: euclidean guarantee}}
\label{proof th euclidean guarantee}

\begin{proof}
    Consider the first case where the shortest paths on the full surface $\mathcal{Y}$ from $P_\mathcal{Y}$ to $B_1$ and from $Q_\mathcal{Y}$ to $B_2$ do not intersect the boundary  $\mathcal{B}$. 
    In this case, we have that the computed geodesic distance on the partial surface to the boundary points is the same as that on the full surface. 
    As such, $d_{\mathcal{Y}'}(P_\mathcal{Y}, B_1) = d_\mathcal{Y}(P_\mathcal{Y}, B_1)$ and $d_{\mathcal{Y}'}(Q_\mathcal{Y}, B_2) = d_\mathcal{Y}(Q_\mathcal{Y}, B_2)$. The shortest path on the full surface passing from $P_\mathcal{Y}$ to $Q_\mathcal{Y}$ passing through $B_1$ and $B_2$ will thus have length $d_\mathcal{Y}(P_\mathcal{Y}, B_1) + d_\mathcal{Y}(Q_\mathcal{Y}, B_2) + d_\mathcal{Y}(B_1, B_2) \ge d_\mathcal{Y}(P_\mathcal{Y}, B_1) + d_\mathcal{Y}(Q_\mathcal{Y}, B_2) + d_E(B_1, B_2)$ according to \cref{th: euclidean bound}. 
    Therefore if the pair satisfies the criterion, the geodesic distance computed on the partial surface is the geodesic distance on the full surface, meaning that the pair is consistent.

    Consider now the second case where one or both of the shortest paths on the full surface $\mathcal{Y}$ from $P_\mathcal{Y}$ to $B_1$ and from $Q_\mathcal{Y}$ to $B_2$ intersect the boundary $\mathcal{B}$. 
    Let $B_0$ and $B_3$ be the first intersection points on each of them. 
    Any trajectory between them, such as the shortest path passing from $B_0$ to $B_1$ followed by the Euclidean straight line from $B_1$ to $B_2$ and then the shortest path from $B_2$ to $B_3$, are at least as long as the Euclidean straight line between $B_0$ and $B_3$: $d_E(B_0, B_3) \le d_{\mathcal{Y}'}(B_0, B_1) + d_{\mathcal{Y}'}(B_2, B_3) + d_E(B_1, B_2)$. 
    Since $B_0$ and $B_3$ are also closer to the original points, we then have $d_{\mathcal{Y}'}(P_\mathcal{Y}, B_0) + d_{\mathcal{Y}'}(Q_\mathcal{Y}, B_3) + d_E(B_0, B_3) \le d_{\mathcal{Y}'}(P_\mathcal{Y}, B_1) + d_{\mathcal{Y}'}(Q_\mathcal{Y}, B_2) + d_E(B_1, B_2)$. 
    Since the pair $B_0$ and $B_3$ satisfy the first case, the criterion will validate the pair $P_\mathcal{Y}$ and $Q_\mathcal{Y}$ only if it is consistent.

\end{proof}

\subsection{Generalisation to Non-Standard Metrics}
\label{sec: sup mat Generalisation to non-standard metrics}

So far, we implicitly used the fact that we were working with the standard way of computing geodesic distances on surfaces. Formally, this means that we only considered manifolds equipped with the standard uniform isotropic Riemannian metric, where unit steps in the tangent plane have the same length as unit steps in the embedding space. Schematically, the manifold and the embedding space shared the same ruler. This allowed us to bound the Riemannian length of the geodesic curve with the Euclidean length of the straight line in the embedding space between points.

However, in some cases, it is natural or beneficial to consider other metrics on the manifold. As new metrics change the length concept, this means that a same curve on the manifold will have a different length. With other metrics, the manifold and the embedding space no longer share the same ruler, and unit steps on the manifold tangent space are no longer of unit Euclidean length. We must thus adapt our criterion in consequence.

A Riemannian metric on the manifold is defined by a symmetric definite positive matrix $M$, such that at any point $x\in \mathcal{X}$, the length of the tangent vector $u$ at point $x$ is given by a quadratic form $\mathcal{R}_x(u) = \sqrt{u^\top M(x) u}$. The geodesic length of a curve on the surface is given by integrating the Riemannian length of all the infinitesimal tangent steps: $\int_0^1\mathcal{R}_{\gamma(t)}(\gamma'(t))dt$ where $\gamma(t)$ parametrises the curve monotonically. The link between the manifold Riemannian length and the embedding Euclidean length of steps is given by the eigenvalues of the metric $M$. Let $\mu_1(x)\le \mu_2(x)$ be its smallest and largest eigenvalues, with eigenvectors $v_1(x)$ and $v_2(x)$ of unit Euclidean norm $\lVert v_1(x)\rVert_2 = \lVert v_2(x)\rVert_2 = 1$. Then Euclidean unit steps have length bounded in $[\sqrt{\mu_1(x)},\sqrt{\mu_2(x)}]$. Under the assumption that $\min_x \mu_1(x) = C_M > 0$, which systematically occurs for common metrics on natural surfaces, Euclidean unit steps have at least $\sqrt{C_M}$ Riemannian length. As such, reparametrising the curve with the standard Euclidean arclength, i.e. the arclength for the Riemannian metric with matrix $I$, we can bound each infinitesimal step along the curve, which has a fixed Euclidean length, to get a bound on the geodesic length: $d_M(x, y) \ge \sqrt{C_M} d_I(x,y)$, where $d_M$ is the geodesic distance on the Riemannian manifold associated to the matrix $M$.

We return to partial surfaces, with $\mathcal{Y}$ and $\mathcal{Y}'$ full and partial versions of the same surface. The surfaces are associated with the intrinsic Riemannian metric $M$. As such, geodesic distances are changed to $d_\mathcal{Y} = d_M$ and are no longer necessarily equal to $d_I$. We assume that $C_M$ is known\footnote{For full-to-partial shape matching, then $C_M$ can be computed on the full surface. For partial-to-partial shape matching, we assume that $C_M$ is given by an oracle.}. We can now adapt our wormhole criterion to the Riemannian metric $M$ to define the new wormhole metric-sensitive criterion 
% $\mathcal{C}_{\mathcal{W}^M}$
\begin{equation}
    \mathcal{C}_{\mathcal{W}}(P_\mathcal{Y}, Q_\mathcal{Y})\! =\! 
    \begin{cases}
        1 &\!\!\!\text{if} \; d_{\mathcal{Y}'}(P_\mathcal{Y},Q_\mathcal{Y}) \le\! \min\limits_{B_1,B_2\in \mathcal{B}} d_{\mathcal{Y}'}(P_\mathcal{Y}, B_1) + d_{\mathcal{Y}'}(Q_\mathcal{Y}, B_2) + \sqrt{C_M}d_E(B_1, B_2),\\
        0 &\!\!\!\text{otherwise}.
    \end{cases}
\end{equation}
Note, that in the degenerate case $C_M=0$, our criterion $\mathcal{C}_{\mathcal{W}}$ degenerates to $\mathcal{C}_{\mathcal{T}}$ as we are unable to bound path lengths inside missing parts. 
By design, this generalised criterion to Riemannian metrics can be used to find consistent pairs.

\begin{theorem}[$\mathcal{C}_{\mathcal{W}}$ guarantees]
    The metric-sensitive wormhole criterion $\mathcal{C}_{\mathcal{W}}$ yields guaranteed pairs.
\end{theorem}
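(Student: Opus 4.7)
The plan is to reproduce the proof of \cref{th: euclidean guarantee} essentially verbatim, with one substitution: wherever \cref{th: euclidean bound} is invoked to bound a full-surface geodesic length from below by the embedding Euclidean distance, I would instead use its metric-sensitive analogue $d_\mathcal{Y}(x,y)\ge \sqrt{C_M}\,d_E(x,y)$. The only genuinely new ingredient is to establish this analogue. For that, I would parametrise a minimising Riemannian geodesic in $\mathcal{Y}$ by the standard Euclidean arclength, which is meaningful because $C_M>0$ by assumption, and use the pointwise inequality $\sqrt{\gamma'(t)^\top M(\gamma(t))\gamma'(t)}\ge \sqrt{C_M}\,\lVert\gamma'(t)\rVert_2$, immediate from $C_M=\min_x\mu_1(x)$ being the global infimum of the smallest eigenvalue of $M$. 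Integrating and applying straight-line optimality in the embedding space gives the desired bound.

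With this replacement in place, I would mirror the two-case split of \cref{proof th euclidean guarantee}. In the first case, where the shortest full-surface paths from $P_\mathcal{Y}$ to $B_1$ and from $Q_\mathcal{Y}$ to $B_2$ do not cross $\mathcal{B}$, partial and full boundary distances coincide on these segments. Concatenating with any trajectory between $B_1$ and $B_2$ in $\mathcal{Y}$ yields a full-surface path from $P_\mathcal{Y}$ to $Q_\mathcal{Y}$ of length bounded below, via the new bound applied to $d_\mathcal{Y}(B_1,B_2)$, by $d_{\mathcal{Y}'}(P_\mathcal{Y},B_1)+d_{\mathcal{Y}'}(Q_\mathcal{Y},B_2)+\sqrt{C_M}\,d_E(B_1,B_2)$. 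The hypothesis $\mathcal{C}_{\mathcal{W}}(P_\mathcal{Y},Q_\mathcal{Y})=1$ then forces $d_\mathcal{Y}(P_\mathcal{Y},Q_\mathcal{Y})\ge d_{\mathcal{Y}'}(P_\mathcal{Y},Q_\mathcal{Y})$, and combined with the trivial reverse $d_\mathcal{Y}\le d_{\mathcal{Y}'}$ one concludes consistency.

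For the second case, where one or both shortest paths do cross the boundary, I would denote by $B_0,B_3$ the first intersection points and reduce to the first case applied to the pair $(B_0,B_3)$. The key intermediate inequality becomes $\sqrt{C_M}\,d_E(B_0,B_3)\le d_{\mathcal{Y}'}(B_0,B_1)+\sqrt{C_M}\,d_E(B_1,B_2)+d_{\mathcal{Y}'}(B_2,B_3)$, which follows from the Euclidean triangle inequality scaled by $\sqrt{C_M}$, combined with $d_{\mathcal{Y}'}\ge \sqrt{C_M}\,d_E$ on the two manifold segments $B_0B_1$ and $B_2B_3$. The rest of the reduction, using that the first intersection points are partial-geodesically closer to $P_\mathcal{Y}$ and $Q_\mathcal{Y}$ than $B_1$ and $B_2$ respectively, is identical to the original.

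The main obstacle is the careful derivation of the metric-sensitive Euclidean bound; once in place, every other inequality of the original proof transfers by linearly inserting the scalar $\sqrt{C_M}$ in front of each $d_E$ term arising from the bound. The edge case $C_M=0$ deserves separate mention, since then the bound degenerates and the generalised criterion collapses to $\mathcal{C}_{\mathcal{T}}$ as already observed in the main text, so the guarantee follows from the known result for $\mathcal{C}_{\mathcal{T}}$ cited from the literature.
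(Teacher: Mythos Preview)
Your proposal is correct and takes essentially the same approach as the paper: both first establish the metric-sensitive analogue $d_\mathcal{Y}\ge\sqrt{C_M}\,d_E$ of \cref{th: euclidean bound} via the eigenvalue lower bound on $M$, and then declare that the two-case argument of \cref{proof th euclidean guarantee} goes through with $\sqrt{C_M}\,d_E$ in place of $d_E$. Your write-up is in fact more explicit than the paper's, which merely says ``apply the same reasoning''; in particular, your handling of the second case---scaling the Euclidean triangle inequality by $\sqrt{C_M}$ and invoking $d_{\mathcal{Y}'}\ge\sqrt{C_M}\,d_E$ on the manifold segments $B_0B_1$ and $B_2B_3$---fills in a detail the paper leaves to the reader.
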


\begin{proof}
    The proof generalises the one of \cref{th: euclidean guarantee}. The difference comes when comparing the length of geodesic paths on the full surface $\mathcal{Y}$ to length of curves in the embedding space. Name $d_I$ the \textit{curved $M$-based Euclidean distance} of geodesic curves on the manifold for the metric $M$. The curved $M$-based Euclidean distance is greater than the Euclidean distance of straight lines between the points $d_I\ge d_E$ (\cref{th: euclidean bound}). Since $d_M \ge \sqrt{C_M} d_I$, i.e. the geodesic distance is at least $\sqrt{C_M}$ times larger than the curved $M$-based Euclidean distance, we have the relationship between the geodesic and Euclidean distances $d_\mathcal{Y} \ge \sqrt{C_M} d_E$. We can then conclude by applying the same reasoning  as in the proof of \cref{th: euclidean guarantee}.
\end{proof}

\section{Further Experimental Details}

\subsection{Multi-Dimensional Scaling}
\label{sec: sup mat Multi-dimensional scaling}

Formally, the original MDS problem aims to minimise the quadratic stress function
\begin{equation}
    \label{eq: mds}
    X^* = \argmin\limits_{X\in \mathbb{R}^{n\times|V|}} \sum\limits_{i,j} w_{ij}(d_{\mathbb{R}^n}(X_i,X_j) - d_{\mathcal{Y}}(P_i, P_j))^2,
\end{equation}
where $w_{ij}\in[0,1]$ is a given optional weighting scheme. 
A common assumption in the theory of MDS is that the given manifold can indeed be isometrically mapped to a low dimensional Euclidean space\footnote{
This assumption is obviously violated for surfaces 
with effective Gaussian curvature, as can be easily verified by the Gauss-Bonnet theorem \cite{bonnet1848memoire}.}. 
%Here, we will process $\mathcal{Y}$ (rather than $\mathcal{Y}'$) in \cref{eq: mds}. 
If this assumption holds we could consider all pairs equally $w_{ij}=1$, which yields the classical scaling approach \cite{schwartz1989numerical,tenenbaum2000global}.

However, we would like embeddings of partial surfaces $\mathcal{Y}'$ to be robust to missing parts, which implies that we need to handle the boundary conditions induced from partial surfaces. 
Enforcing the preservation of distances for inconsistent pairs could distort the embedding. 
To overcome the influence of holes, cuts, and boundaries in general, we resort to our criterion to filter out inconsistent pairs. 

In \cite{bronstein2006efficient,bronstein2006robust,rosman2008topologically,rosman2010nonlinear}, the weights are set to $w_{ij} = \mathcal{C}_{\mathcal{T}}(P_i, P_j)$, leading to the TCIE method \cite{rosman2008topologically,rosman2010nonlinear}. 
To the best of our knowledge, \cite{rosman2008topologically,rosman2010nonlinear} provides the best weighting scheme based on consistent pairs of points without involving heuristics.
Other approaches exist relying on heuristics, such as focusing on local pairs of points due to local approximations of geodesic distances \cite{schwartz2019intrinsic}.
We propose to refine the TCIE idea by taking a more inclusive criterion instead $w_{ij} = \mathcal{C}_{\mathcal{W}}(P_i, P_j) = \boldsymbol{M}_{ij}$. 
Minimising \cref{eq: mds} can be efficiently done by any optimization method, for example, the SMACOF algorithm \cite{borg2005modern} was shown to be equivalent to a constant step size weighted gradient descent \cite{bronstein2006multigrid}.
Classical scaling \cite{schwartz1989numerical,tenenbaum2000global} is a method of choice for initialization to avoid local minima.
% We name our MDS approach Wormhole constrained isometric embedding (WHCIE).

Our toy surfaces are pointclouds of swiss rolls with $2000$ randomly sampled vertices, stretched in width by a factor of $1.5$, to which either no noise (\cref{fig: mds all less stretch refRoll single page no noise full full hole broken}) or very small Gaussian noise of standard deviation $0.2$ is added\footnote{We provide the code to generate the pointclouds in our code repository.} (\cref{fig: mds all less stretch refRoll single page small noise full full hole broken}). For each surface, we removed either a hole or a missing part connected to the boundary. We use Dijkstra's algorithm to compute geodesic distances on the pointclouds. Following \cite{rosman2008topologically,rosman2010nonlinear}, we also set $w_{ij}=1$ for small geodesic distances (smaller than $3$) to handle the boundaries. We here compare with more reference works in manifold learning than in the main manuscript, namely Isomap \cite{tenenbaum2000global}, LLE \cite{roweis2000nonlinear}, Hessian LLE \cite{donoho2003hessian}, MLLE \cite{zhang2006mlle}, LTSA LLE \cite{zhang2004principal}, Laplacian eigenmaps \cite{belkin2003laplacian},
Diffusion maps \cite{coifman2006diffusion},
t-SNE \cite{van2008visualizing},
UMAP \cite{mcinnes2018umap}
PCA \cite{pearson1901pca,hotelling1933analysis},
Kernel PCA \cite{scholkopf1998nonlinear},
and TCIE \cite{rosman2008topologically,rosman2010nonlinear}.
We use either the Euclidean or geodesic distance matrix for Diffusion Maps, t-SNE, and UMAP. We run t-SNE with perplexity parameter of 
% $10$, $30$, and 
$50$ and UMAP with minimum distance parameter of 
%$0.1$ and 
$0.8$. 
For both methods, these high parameter values are supposed to respect data topology rather than encourage clustering.
Methods requiring nearest neighbor computations use $k = 15$ neighbors.
For reference, we also show the computed embeddings on the full surfaces.

In the ideal noiseless case, the non-vanilla LLE methods are able to handle the partial surface with a hole quite well. However, only a minor amount of noise breaks these methods completely. The t-SNE and UMAP methods perform poorly, even on the full surfaces. These popular methods were indeed primarily designed to show nice clusters, but are not intended and should not be used for viewing non-clustered data manifolds in low dimensions as is too often the case. Both TCIE and our WHCIE are robust to small amounts of noise, yet our embeddings are superior as they are less distorted by the missing parts as we find more consistent pairs.

\begin{figure}[htbp]
    \centering
     \includegraphics[width=0.8\textwidth]{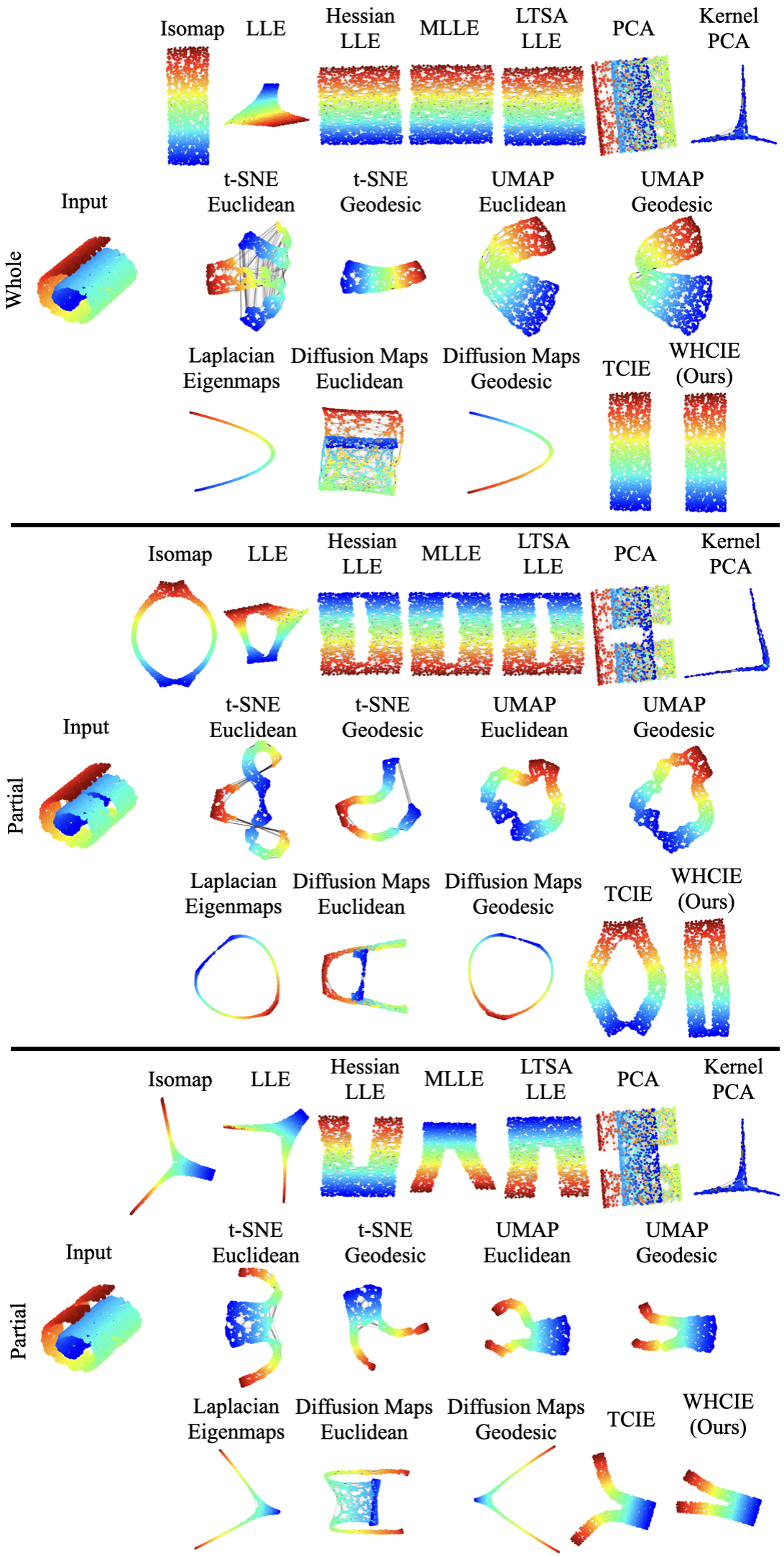}
    \caption{
    MDS embeddings of reference manifold learning methods on a whole swiss roll without noise and its variants with a hole or a cut.
    }
    \label{fig: mds all less stretch refRoll single page no noise full full hole broken}
\end{figure}

\begin{figure}[htbp]
    \centering
     \includegraphics[width=0.8\textwidth]{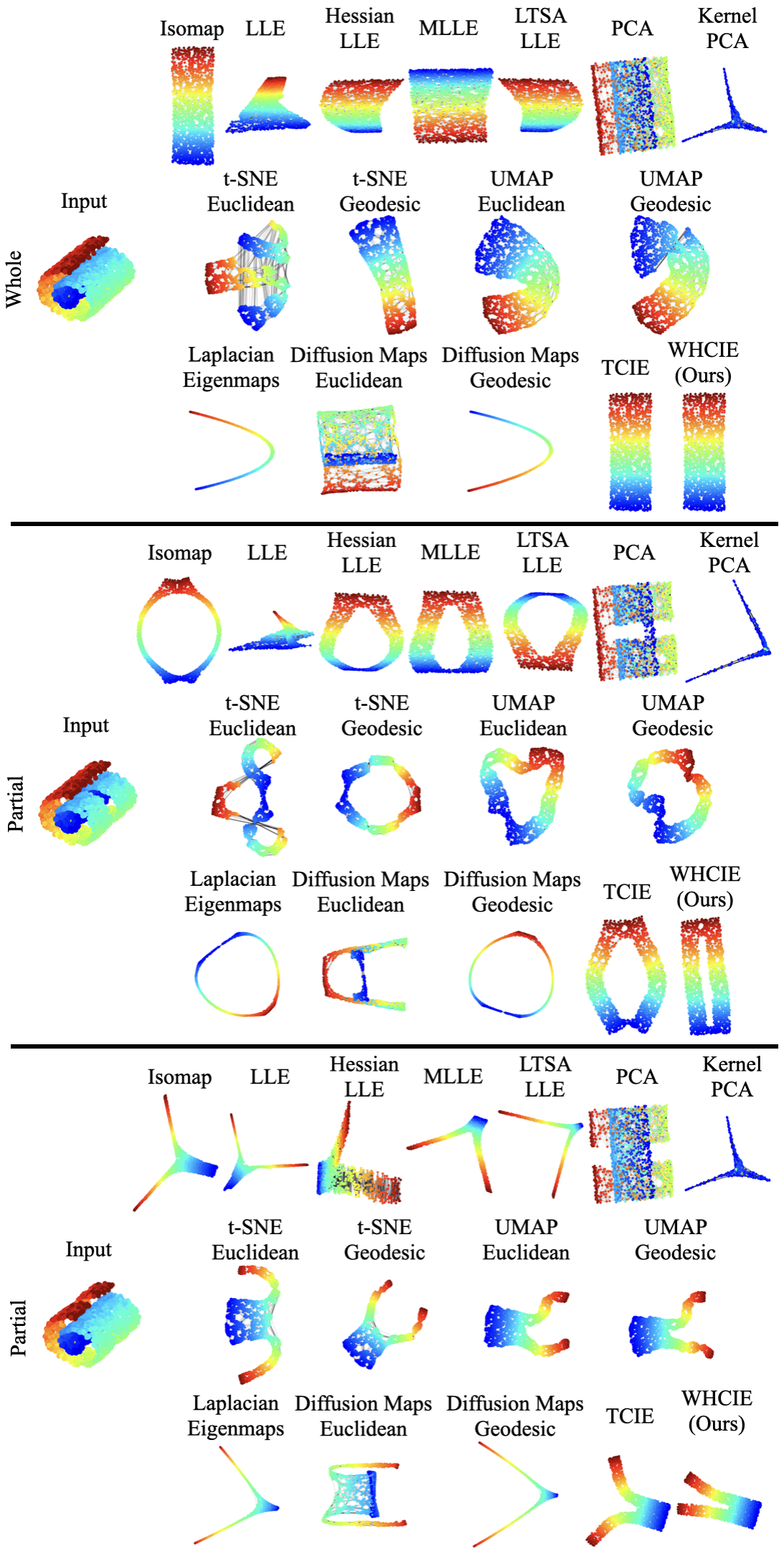}
    \caption{
    MDS embeddings of reference manifold learning methods on a whole swiss roll with small noise and its variants with a hole or a cut.
    }
    \label{fig: mds all less stretch refRoll single page small noise full full hole broken}
\end{figure}

%\clearpage

\subsection{Discretization}
\label{sec: sup mat Discretization}

The definition of consistent pairs is so far in a continuous setting. 
We need to be adapt it to the discrete world, as computational surfaces are themselves discrete. 
Let $V$ be a finite set of vertices sampled from the continuous surface. 
We only compute guaranteed pairs between the vertices of $V$.
To that end, we consider only the discrete set of boundary vertices $B = V\cap \mathcal{B}$.
Given this discretization, we calculated $\mathcal{C}_{\mathcal{W}}$-guaranteed pairs in a naive manner.
We first compute the geodesic distance matrix $\boldsymbol D_{\mathcal{Y}'}$, having $(i,j)$ entry $(\boldsymbol D_{\mathcal{Y}'})_{ij}$ as the geodesic distance on the partial surface $\mathcal{Y}'$ between the $i$-th and $j$-th vertices of $V$, using a relevant method for computing geodesic distances on the discrete surface, e.g. Dijkstra's algorithm \cite{dijkstra1959note}, Fast Marching Method \cite{fmm}, MMP \cite{surazhsky2005fast,mitchell1987discrete}, or deep learning methods for geodesic distance calculation \cite{lichtenstein2019deep,huberman2023deep}. 
We then calculate the Euclidean distance between each pair of boundary vertices in $B$. 
We have thus computed all elements of the threshold in \cref{equation: euclidean bound}, and all that is left is to search for the two boundary vertices that give the minimal bound.
Optionally, computations can be sped up using a GPU, we used a V100 and it took a few minutes to compute the masks for surfaces on the PFAUST datasets.
We found that it is best to compute the $\mathcal{O}(|V|^2|B|^2)$ distances in batches due to the high spatial complexity, which for common surfaces becomes prohibitively large.
This process computes a matrix $\boldsymbol{K}$ of size $|V|\times |V|$, containing the threshold of the criterion $\mathcal{C}_{\mathcal{W}}$ calculated for each pair of vertices. 
The guaranteed pairs are the true elements of the binary mask $\boldsymbol{M}_{ij} = \mathbbm{1}_{(\boldsymbol{D}_{\mathcal{Y'}})_{ij} \le \boldsymbol{K}_{ij}}$.

An alternative faster approach to the naive method above for computing the mask matrix $\boldsymbol{M}$ would be to recompute geodesic distances on a tweaked distance graph. The adjustment would be to create an edge between all boundary points with distance equal to the Euclidean distance between them. Shortest paths on this tweaked graph belong to one of two types. The first type of new shortest paths between any two points corresponds to paths that do not pass through boundary points, which means that they are the same as the ones on the partial surface $\mathcal{Y}'$: the pair is consistent and guaranteed by  $\mathcal{C}_\mathcal{W}$. The other type of new shortest paths between any two points corresponds to paths using a single new edge between boundary vertices, which means that their length is equal to the wormhole worst case distance bound between these two points: the pair is not guaranteed by $\mathcal{C}_\mathcal{W}$. This procedure thus computes the criterion matrix $\tilde{\boldsymbol{K}}$ defined as
\begin{equation}
    \tilde{\boldsymbol{K}}_{ij} = \min \{ (\boldsymbol{D}_{\mathcal{Y}'})_{ij}, \boldsymbol{K}_{ij}\}.
\end{equation}
We can then compute the binary mask matrix by comparing the distances computed on the original and the tweaked graphs $\boldsymbol{M}_{ij} = \mathbbm{1}_{(\boldsymbol{D}_{\mathcal{Y'}})_{ij} \le \tilde{\boldsymbol{K}}_{ij}}$.
The total cost of this approach comes from computing twice the distance algorithm, once on a graph and once on its modification with $\Theta(|B|^2)$ extra edges. As such, the complexity of this method is $\Theta\big(|V|\big(|E| + |B|^2 + |V|\log(|V|)\big)\big)$. However, in practice, we usually have $|B| =O(\sqrt{|V|})$. As such, the complexity is usually $O\big(|V|\big(|E| + |V|\log(|V|)\big)\big)$, meaning that computing the mask criterion $\boldsymbol{K}$ has the same complexity as computing the shortest distances on the partial shape $\boldsymbol{D}_{\mathcal{Y}'}$. Note that for triangulations or kNN graphs, $|E| = \Theta(|V|)$ and then the complexity for computing the wormhole mask is simply $O\big(|V|^2\log(|V|)\big)$ in practice.

\subsection{Derivation of the Masked Geodesic Distance Preservation Loss}
\label{sec: sup mat Derivation of the Masked Geodesic Distance Preservation Loss}

Our discrete loss function $\mathcal{L}_{\text{geo}}$ in Equation (\ref{loss: partial shape loss function}) is based on the continuous loss presented in \cite{aflalo2016spectral}, see also \cite{bronstein2006generalized}.
Following their formulations we introduce a loss for the correspondence function between $\mathcal{X}$ and $\mathcal{Y}'$, defined by $\tilde{p}:\mathcal{Y'}\times \mathcal{X} \rightarrow \mathbb{R}^+$ as,
\begin{eqnarray}
    \int_\mathcal{Y'\times Y'} \left(\int_\mathcal{X\times X} d_\mathcal{X}(x_1, x_2) \tilde{p}(x_1, y'_1)\tilde{p}(x_2, y'_2) da_{x_1} da_{x_2} - d_\mathcal{Y'}(y'_1, y'_2) \right)^2 m(y'_1,y'_2)da_{y'_1} da_{y'_2}
\end{eqnarray}
where $d_{\mathcal{X}}$ and $d_{\mathcal{Y}'}$ measure the distances between surface points, and $m:\mathcal{Y'}\times \mathcal{Y'} \rightarrow \{0,1\}$ is our binary masking function. 
We readily have the discrete version, 
\begin{eqnarray}
    \mathcal{L}(\boldsymbol{\tilde P}, \boldsymbol{D}_\mathcal{X}, \boldsymbol{D}_\mathcal{Y'},\boldsymbol{M}) &=& \|\sqrt{\boldsymbol{M}}\odot(\boldsymbol{\tilde P} \boldsymbol{A}_\mathcal{X} \boldsymbol{D}_\mathcal{X}\boldsymbol{A}_\mathcal{X}\boldsymbol{\tilde P}^\top  - \boldsymbol{D}_\mathcal{Y} )\|_\mathcal{Y'Y'},
\end{eqnarray}
where $\|\boldsymbol{B}\|_\mathcal{Y'Y'} = \text{trace}(\boldsymbol{B}^\top\boldsymbol{A}_\mathcal{Y}\boldsymbol{B}\boldsymbol{A}_\mathcal{Y})$ and $\sqrt{\boldsymbol{M}}$ is the entry-wise square-root of $\boldsymbol{M}$. 
Similar to previous papers \cite{halimi2019unsupervised} we assume that our learned correspondence matrix implicitly contains the area matrix $\boldsymbol{A}_\mathcal{X}$, thus, $\boldsymbol{P} = \boldsymbol{\tilde P} \boldsymbol{A}_\mathcal{X}$.
Consequently, our loss function is defined as follows,
\begin{eqnarray}
    \mathcal{L}_{\text{geo}}(\boldsymbol{P}, \boldsymbol{D}_{\mathcal{X}}, \boldsymbol{D}_{\mathcal{Y}'}, \boldsymbol{M}) &=& \|\sqrt{\boldsymbol{M}}\odot(\boldsymbol{PD}_\mathcal{X}\boldsymbol{P}^\top - \boldsymbol{D}_\mathcal{Y'}) \|_\mathcal{Y'Y'}.
\end{eqnarray}
For simplicity, we expressed it element-wise,
\begin{eqnarray}
    \mathcal{L}_{\text{geo}}(\boldsymbol{P}, \boldsymbol{D}_{\mathcal{X}}, \boldsymbol{D}_{\mathcal{Y}'}, \boldsymbol{M}) &=&
    \sum_{ij} \boldsymbol{M}_{ij} \boldsymbol{A}_{\mathcal{Y}'ii} \boldsymbol{A}_{\mathcal{Y}'jj}  \left( (\boldsymbol{P}\boldsymbol{D}_\mathcal{X}\boldsymbol{P}^\top - \boldsymbol{D}_{\mathcal{Y}'})_{ij} \right)^2.
\end{eqnarray}
The same derivation led to the non-masked loss in \cite{bracha2023partial}.

\subsection{Detailed Implementation Considerations}
\label{sec: sup mat Detailed implementation considerations}

Most partial shape matching pipelines \cite{DPFM,cao2023unsupervised,bracha2023partial} can be divided into four parts; 
feature refinement network, correspondence matrix creation, loss functions, and post processing.
The feature refinement network gets raw input features of the surface, such as $xyz$ coordinates, and additional geometric properties of the surface, e.g.
the LBO eigendecomposition,
and outputs a refined feature for each vertex. Given the refined features from each surface, a correspondence matrix is built either from FM \cite{roufosse2019unsupervised,halimi2019unsupervised}, or, as in our pipeline, directly from similarities between features on the different surfaces \cite{attaiki2023understanding,cao2023unsupervised,bracha2023partial}. The loss functions operate on the FM matrix that is calculated directly from the feature matrix \cite{roufosse2019unsupervised}, or on the FM which is calculated from the correspondence matrix \cite{bracha2023partial}, or on the correspondence matrix directly \cite{halimi2019unsupervised}. 
Post processing allows to refine test time results and usually consists in applying the methods Zoom-out\cite{zoomout}, PMF \cite{vestner2017efficient}, or refinement of the network weights via test time adaptation \cite{cao2023unsupervised}.

We use as input features the $xyz$ coordinates of each vertex along with its estimated normal.
We took DiffusionNet \cite{sharp2022diffusionnet} to be the feature extraction network, with shared weights between the full and the partial surfaces. It outputs the refined features $\boldsymbol{F}_{\mathcal{S}}\in \mathbb{R}^{|V_{\mathcal{S}}|\times d}$ where 
$d = 256$
is the feature dimensionality,
and $\mathcal{S}\in\{\mathcal{X},\mathcal{Y}'\}$.
We follow \cite{bracha2020shape}, by replacing the FM layer with a direct computation of the correspondence matrix via Softmax similarity, similar to \cite{bensaid2023partialpiecewise,cao2023unsupervised},
\begin{equation}
    \boldsymbol{P}_{ij} = \frac{\exp(\boldsymbol{G}_{ij}/\tau)}{\sum_i\exp(\boldsymbol{G}_{ij}/\tau)},
\end{equation}
where $\boldsymbol{G} = \boldsymbol F_{\mathcal{Y}'}^\top \boldsymbol F_{\mathcal{X}}$, and $\tau$ is a temperature hyper-parameter, where
$\tau = 0.07$ for the HOLES dataset and $\tau = 0.01$ for the CUTS and PFAUST datasets. We point out that the correspondence matrix is row normalised $\sum_i \boldsymbol{P}_{ij} = 1$ but not column normalised, as some vertices of $\mathcal{X}$ in missing parts of $\mathcal{Y}$ should not have corresponding vertices.
The correspondence matrix is then fed into our loss function $\mathcal{L}_{\text{geo}}$ using the soft mask $\boldsymbol{M}^s$ unless stated otherwise
(Equation (\ref{loss: partial shape loss function})), 
with Lagrangian coefficient  $\lambda_{\text{geo}} = 10^3$.
Following \cite{roufosse2019unsupervised},
we additionally use regularization based on the FM matrix $\boldsymbol{C}$ calculated directly from the correspondence matrix $\boldsymbol{P}$,
\begin{equation}
    \boldsymbol C = \boldsymbol{\Phi}_{\mathcal{Y}'}^\top \boldsymbol A_{\mathcal{Y}'} \boldsymbol{P} \boldsymbol{\Phi}_\mathcal{X},
\end{equation}
where $\boldsymbol{\Phi}_{\mathcal{X}}$ and $\boldsymbol{\Phi}_{\mathcal{Y}'}$ are the LBO eigenfunctions of the surfaces $\mathcal{X}$ and $\mathcal{Y}'$, and  $\boldsymbol A_{\mathcal{Y}'}$ is diagonal matrix containing the area associated to each vertex in the discrete surface $\mathcal{Y}'$.
We plug the FM matrix into the regularisation loss $\mathcal{L}_{\text{ortho}}$
with Lagrangian coefficient $\lambda_{\text{ortho}} = 1$,
which promotes the preservation of vertex area under the estimated correspondence
\cite{roufosse2019unsupervised,DPFM},
\begin{equation}
   \mathcal{L}_{\text{ortho}}(\boldsymbol{C}) = \norm{\boldsymbol{CC}^\top - \boldsymbol J_{r}}_F,
\end{equation}
where $\boldsymbol J_{r} = 
\left[\begin{smallmatrix}\boldsymbol I_{r} & \boldsymbol{0} \\ \boldsymbol{0} &\boldsymbol{0} \end{smallmatrix}\right]
$ is the identity matrix until column $r$ and only zeros columns afterwards, and $r$ is number of eigenvalues of the LBO on $\mathcal{Y}'$ that are smaller than the largest eigenvalue
of the truncated LBO on the discrete surface $\mathcal{X}$.
We trained our network for $20000$ iterations, with Adam optimizer \cite{kingma2014adam}, with a learning rate of $10^{-3}$ and a cosine annealing scheduler \cite{loshchilov2016cosannealing} with minimum learning
rate parameter $\eta_\textit{min} = 10^{-4}$ and maximum temperature of $T_\textit{max} = 300$ steps. 
Lastly, for post-processing, we use the test time adaptation refinement method \cite{cao2023unsupervised}, that refines the network weights separately for each pair of surfaces with $15$ iterations of gradient descent.

Compared to DirectMatchNet \cite{bracha2023partial}, our method only involves an additional preprocessing of the shape to compute the mask matrices for the loss function. As such, from a time perspective, both methods are comparable and even identical at inference time without test time adaptation. For reference, inference time is 0.12s on average on our hardware for both DirectMatchNet and our method on the HOLES dataset.

\subsection{Further Shape Correspondence Result}
\label{sec: sup correspondence result}

We here display further qualitative results on the PFAUST M (\cref{fig:res qual_pfaust_m_sup}), H (\cref{fig:res qual_pfaust_h_sup}), SHREC'16 CUTS (\cref{fig:res qual_cut1_sup}), and HOLES (\cref{fig:res qual_holes_sup}).
The results support those presented in \cref{fig:qual_pfaust main}. 
Methods not based on the preservation of distances produce significant distortions compared to those that do. Furthermore, by focusing on consistent pairs, our method yields less distortion near holes than a similar approach \cite{bracha2023partial} not taking them into account.

We also provide in \cref{fig:res pck all} Percentage of Correct Keypoint (PCK) curves describing the percentage of correct predictions with respect to the geodesic error for the baseline methods UnsupDPFM \cite{DPFM}, RobustFMnet \cite{cao2023unsupervised}, DirectMatchNet \cite{bracha2023partial}, and our method, without refinement for all methods. 
In particular, our method yields clear improvement on the challenging PFAUST-H benchmark, and moderate improvements on the other datasets, which corroborates the results of \cref{tab:result_table_shrec,tab:result_table pfaustrm}.

\begin{figure}[htbp]
    \centering
    %\frame{
    \includegraphics[width=\textwidth]{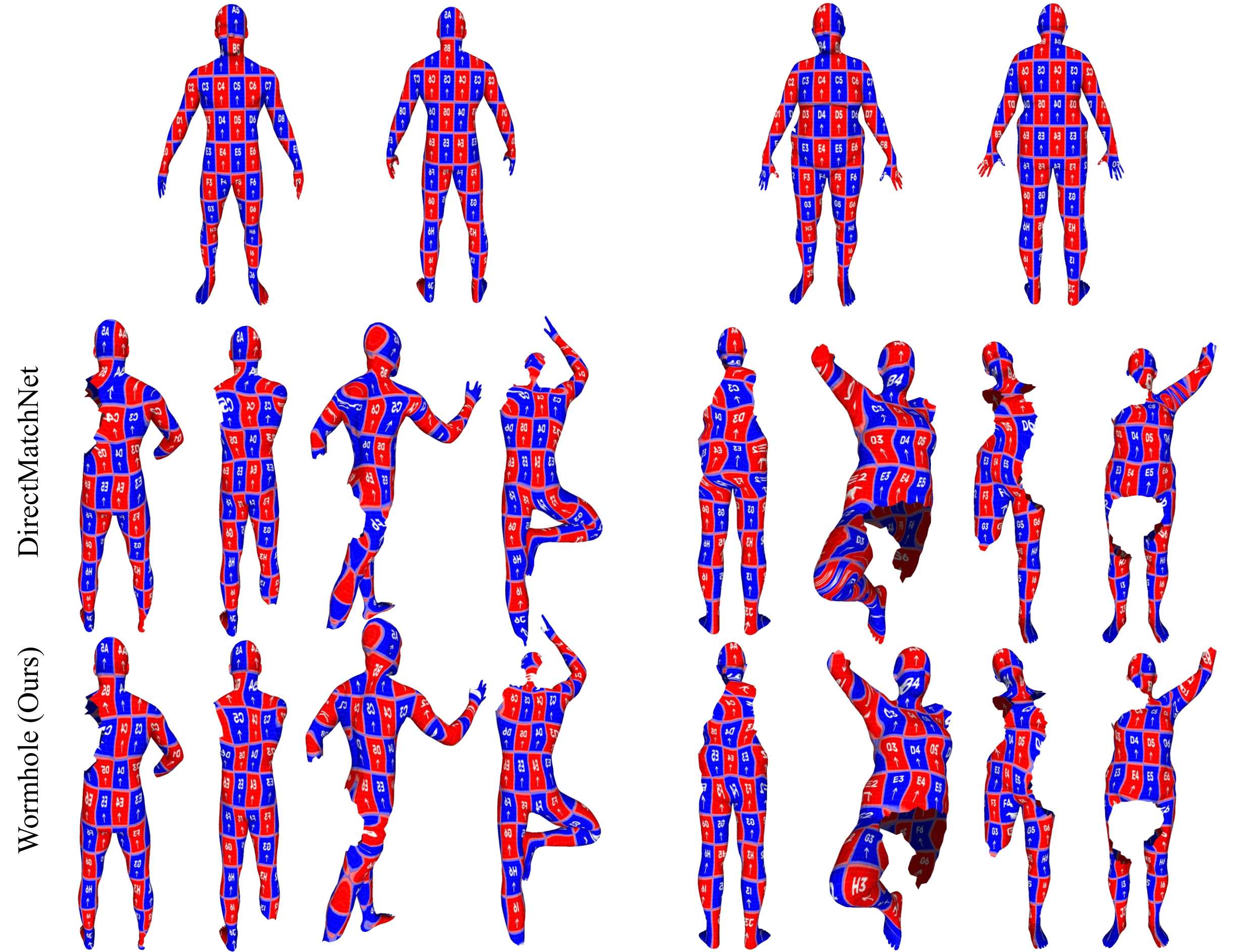}
    %}
    \caption{
    Additional qualitative results on  the PFAUST-M dataset.
    }
    \label{fig:res qual_pfaust_m_sup}
\end{figure}

\begin{figure}[htbp]
    \centering
    %\frame{
    \includegraphics[width=\textwidth]{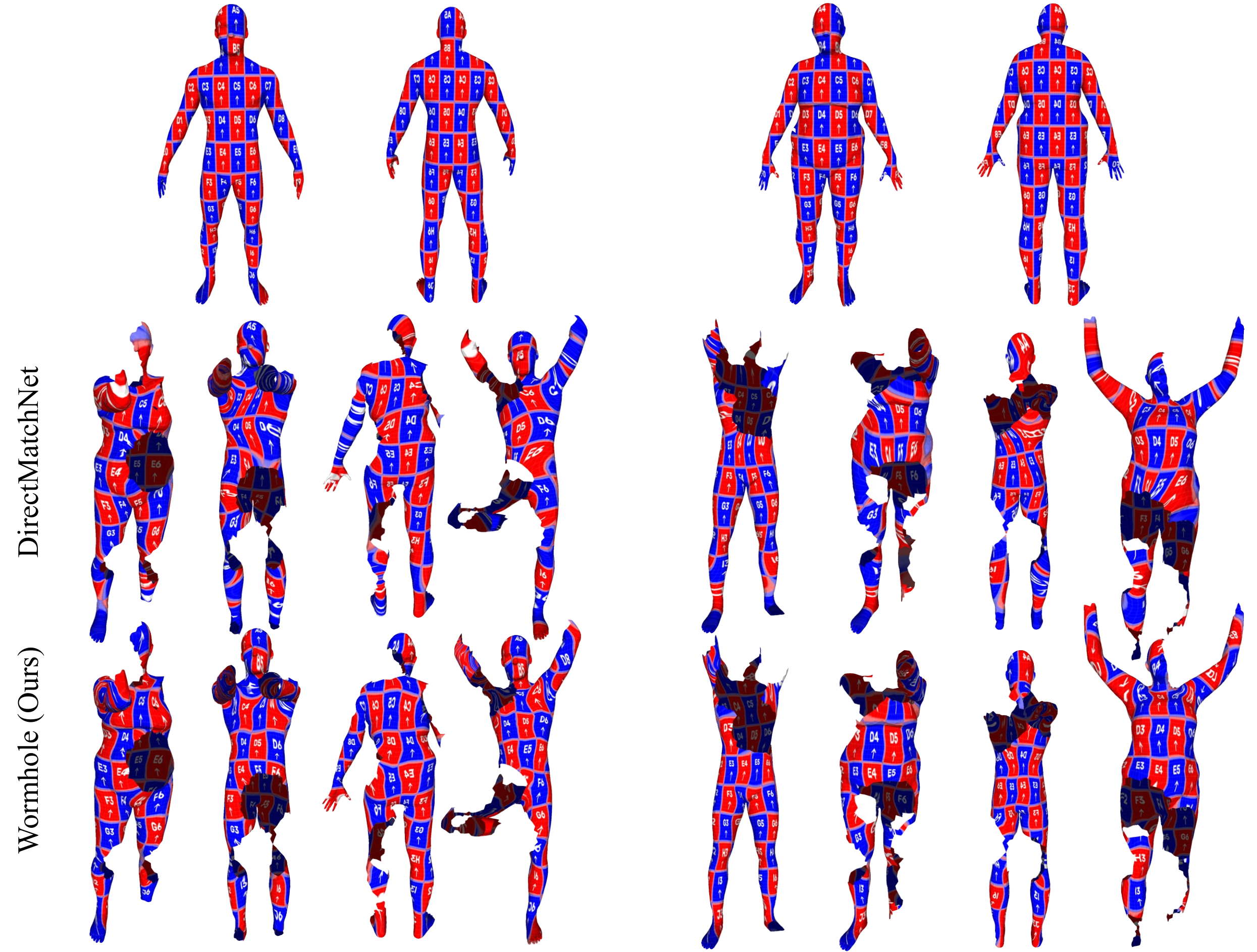}
    %}
    \caption{
    Additional qualitative results on  the PFAUST-H dataset.
    }
    \label{fig:res qual_pfaust_h_sup}
\end{figure}

\begin{figure}[htbp]
    \centering
    \includegraphics[width=0.94\textwidth]{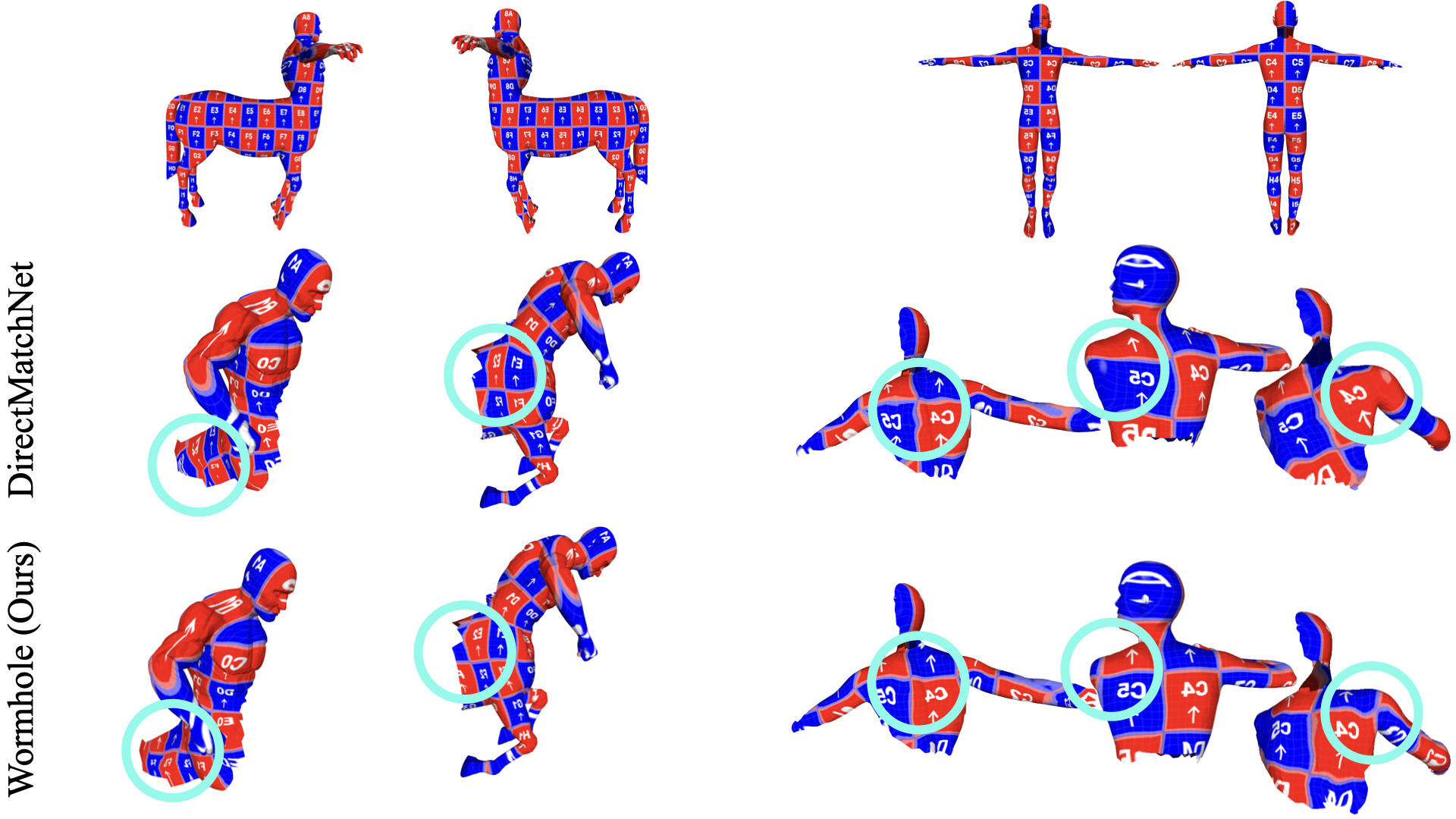}\\[0.5em]
    \includegraphics[width=0.94\textwidth]{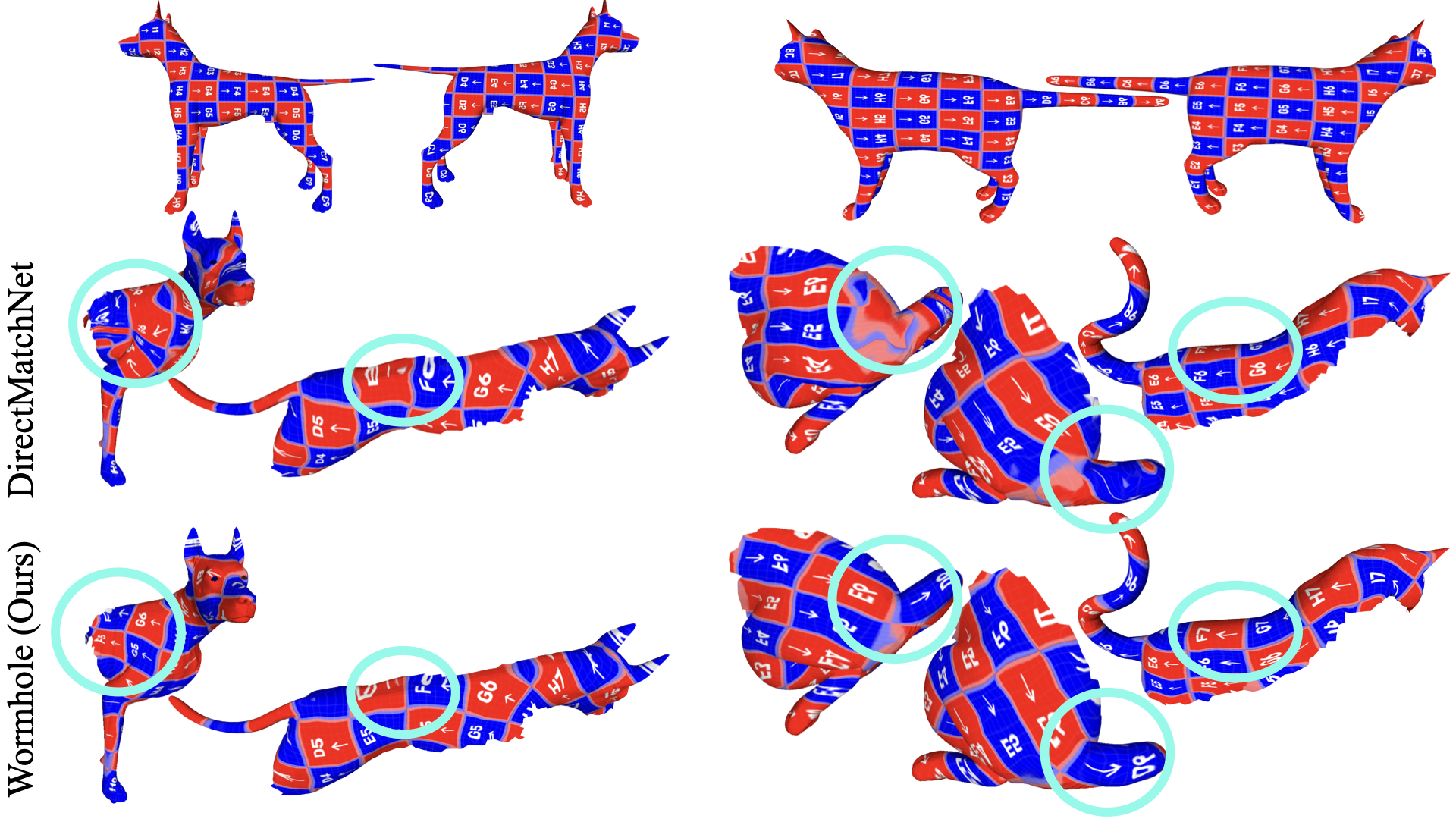}\\[0.5em]
    \includegraphics[width=0.94\textwidth]{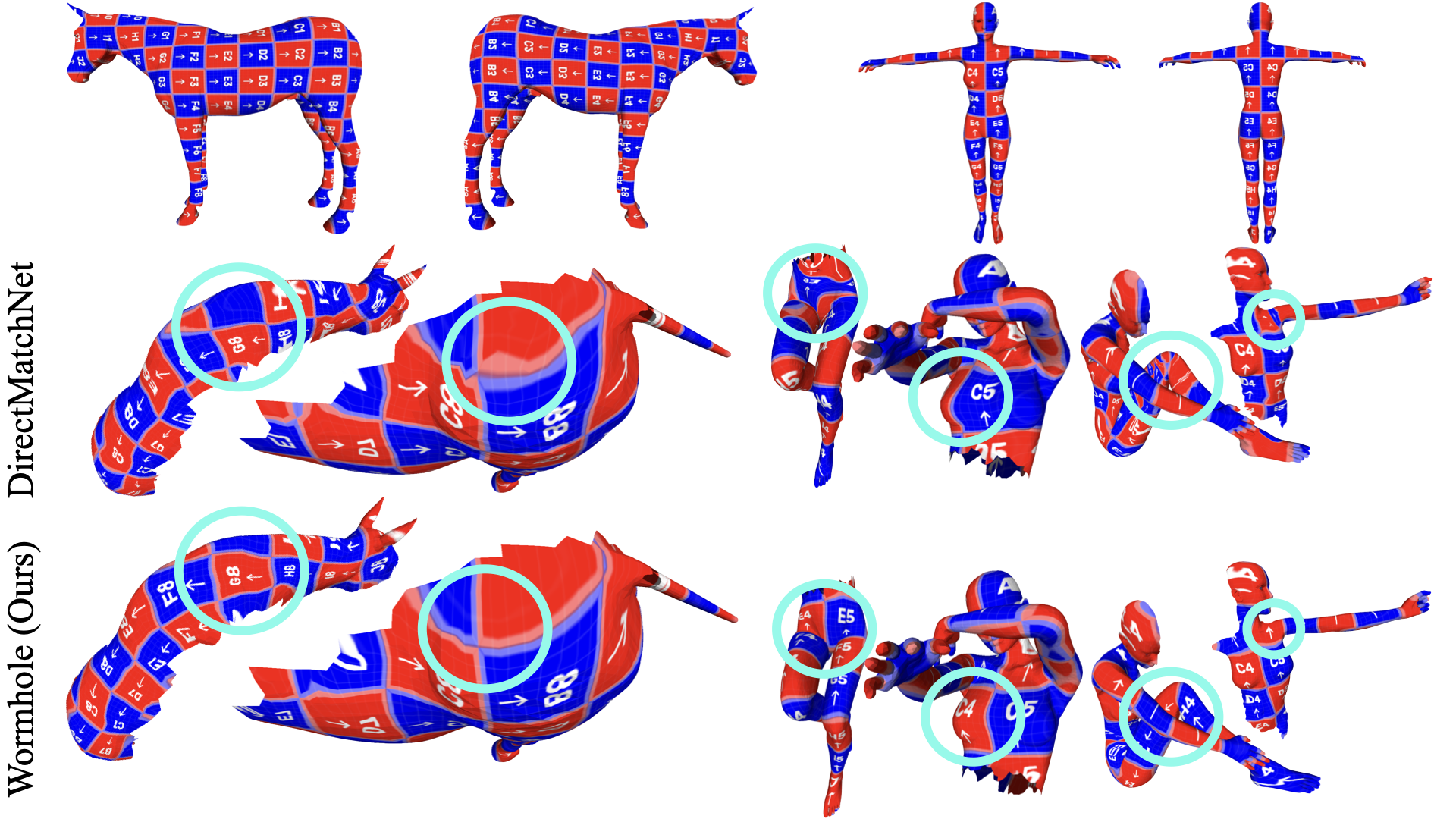}
    \caption{
    Additional qualitative results on  the SHREC'16 CUTS dataset.
    }
    \label{fig:res qual_cut1_sup}
\end{figure}

\begin{figure}[htbp]
    \centering
    %\frame{
    \includegraphics[width=0.94\textwidth]{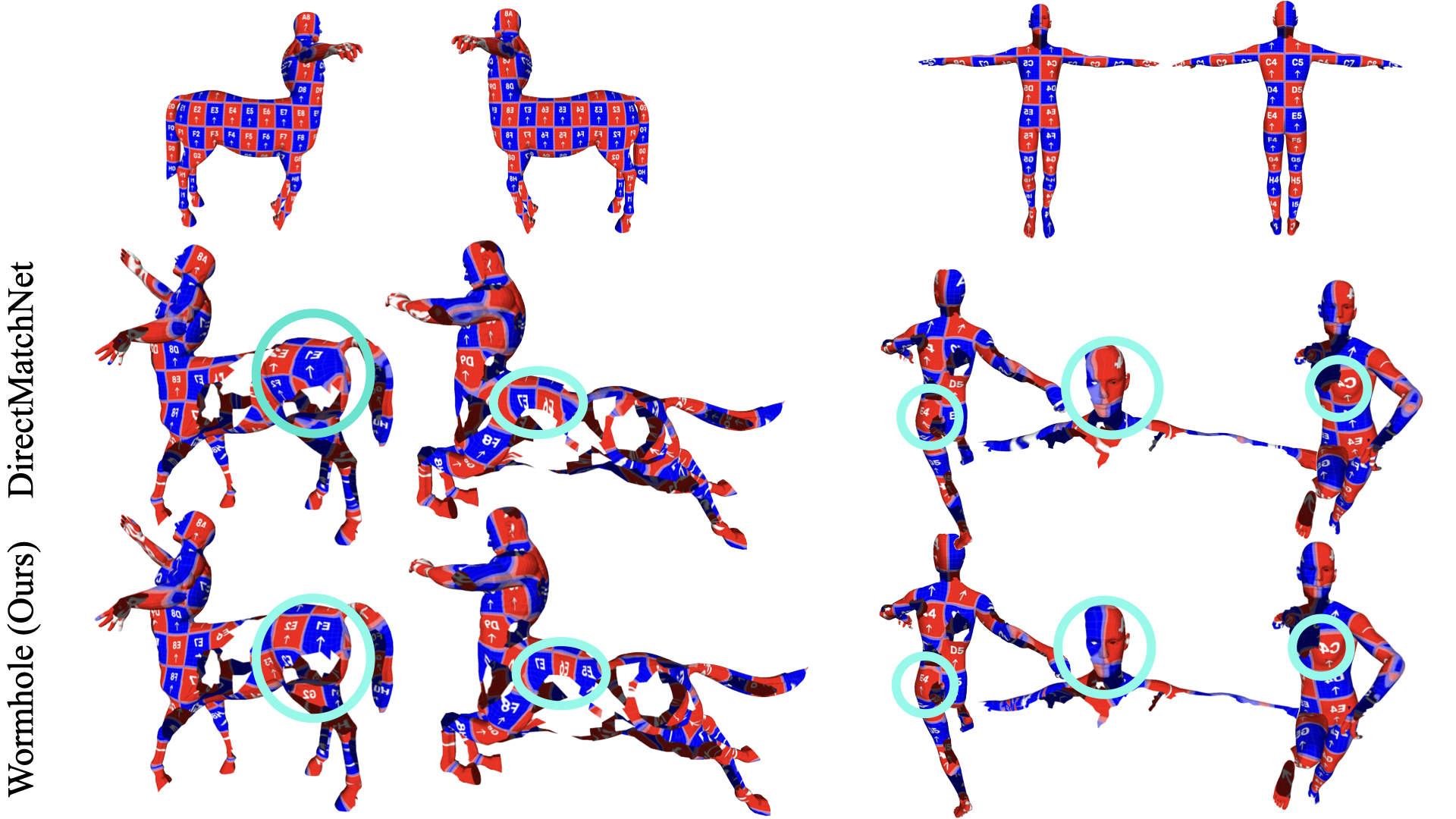}\\[0.5em]
    \includegraphics[width=0.94\textwidth]{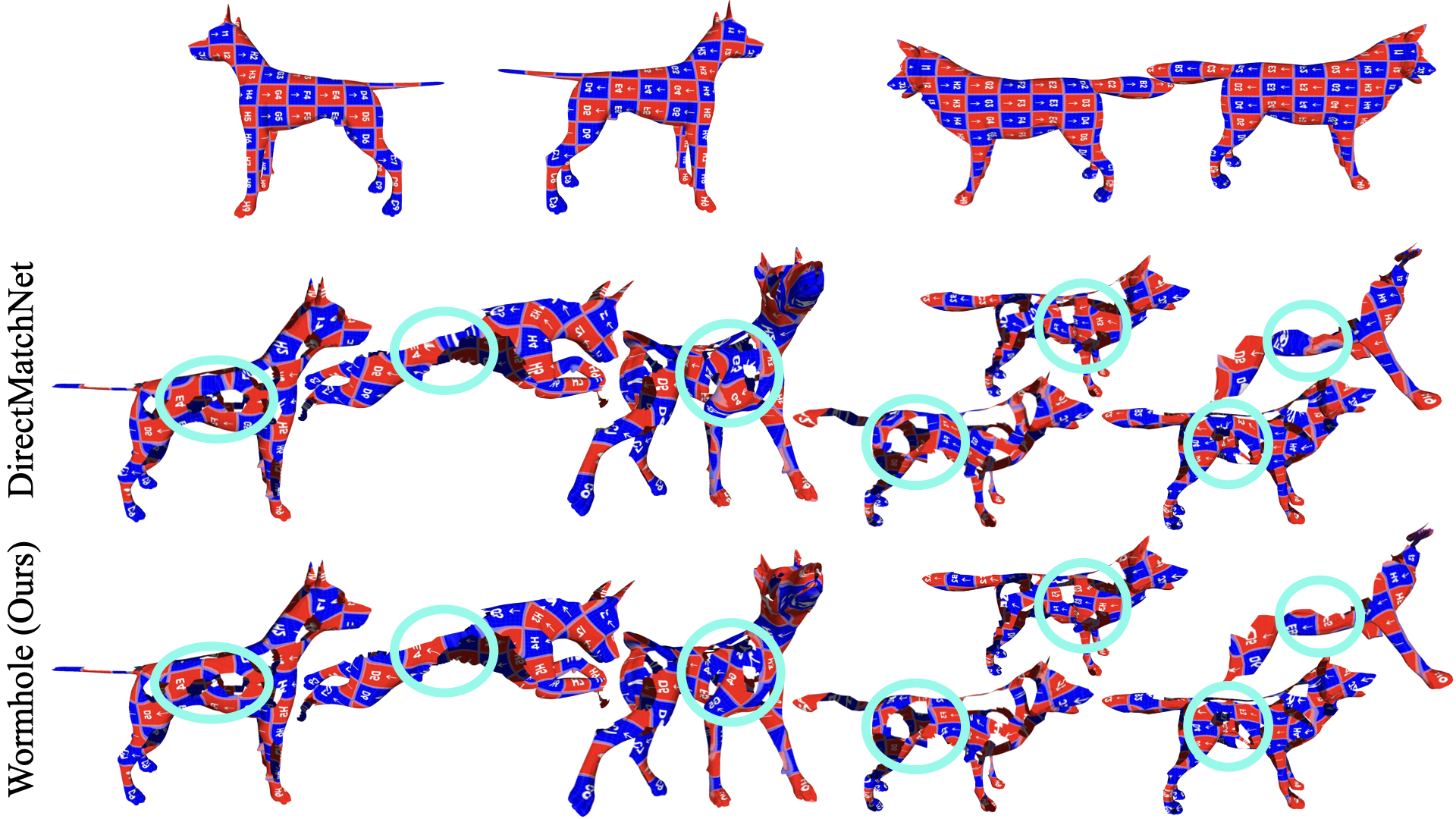}\\[0.5em]
    \includegraphics[width=0.94\textwidth]{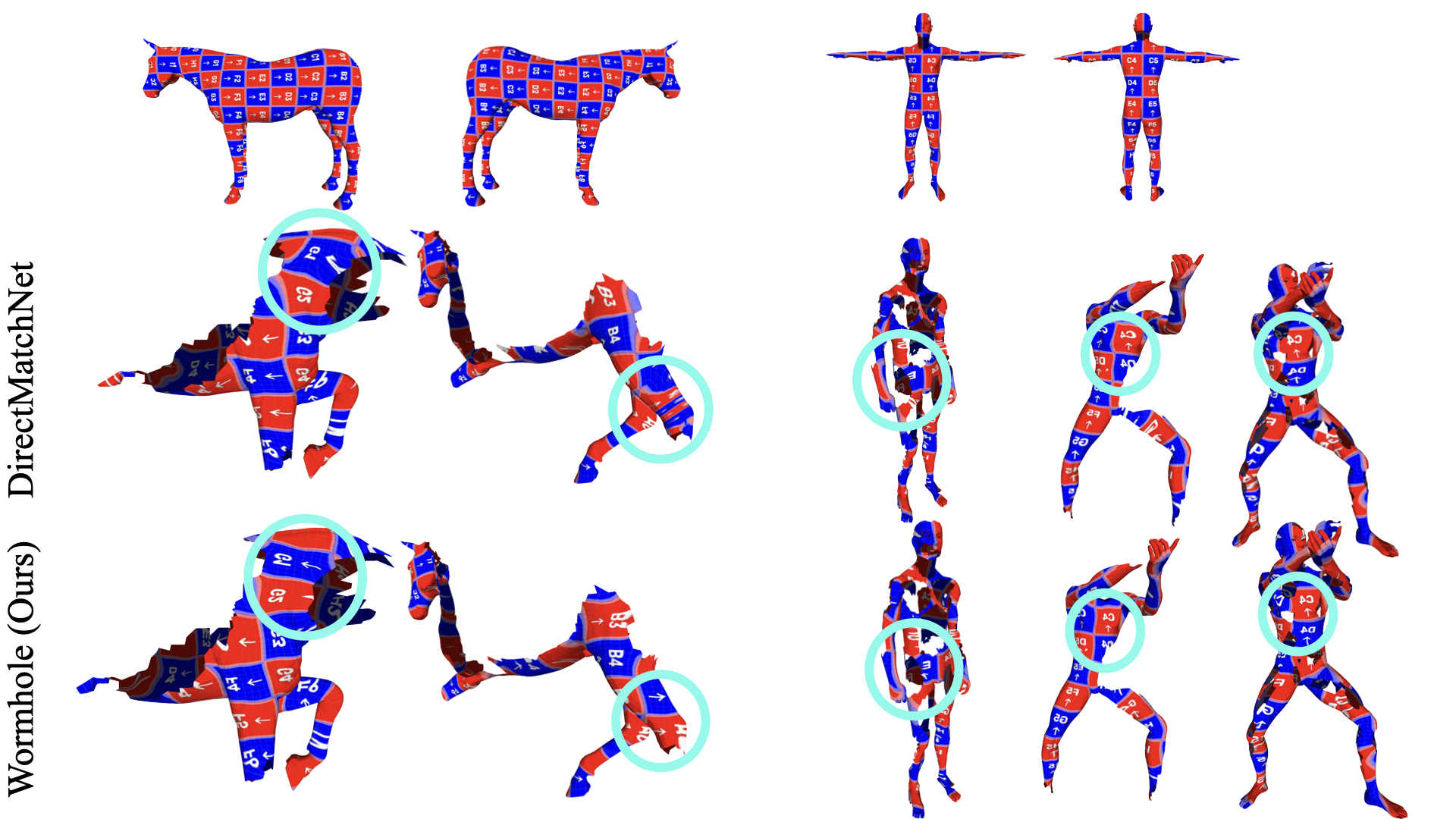}
    %}
    \caption{
    Additional qualitative results on  the SHREC'16 HOLES dataset.
    }
    \label{fig:res qual_holes_sup}
\end{figure}

\begin{figure}[htbp]
    \centering
    %\frame{
    \adjincludegraphics[width=0.48\textwidth,
    trim={{0.045\width} 0 {0.08\width} {0.06\height}}
    ,clip]{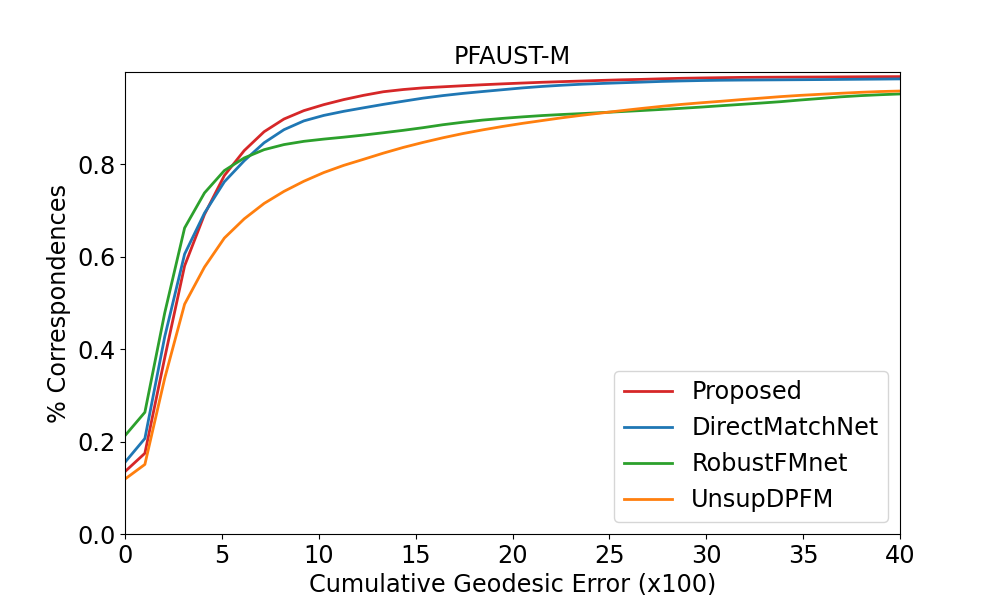}
    %}
    \hfill
    %\frame{
    \adjincludegraphics[width=0.48\textwidth,
    trim={{0.045\width} 0 {0.08\width} {0.06\height}}
    ,clip]{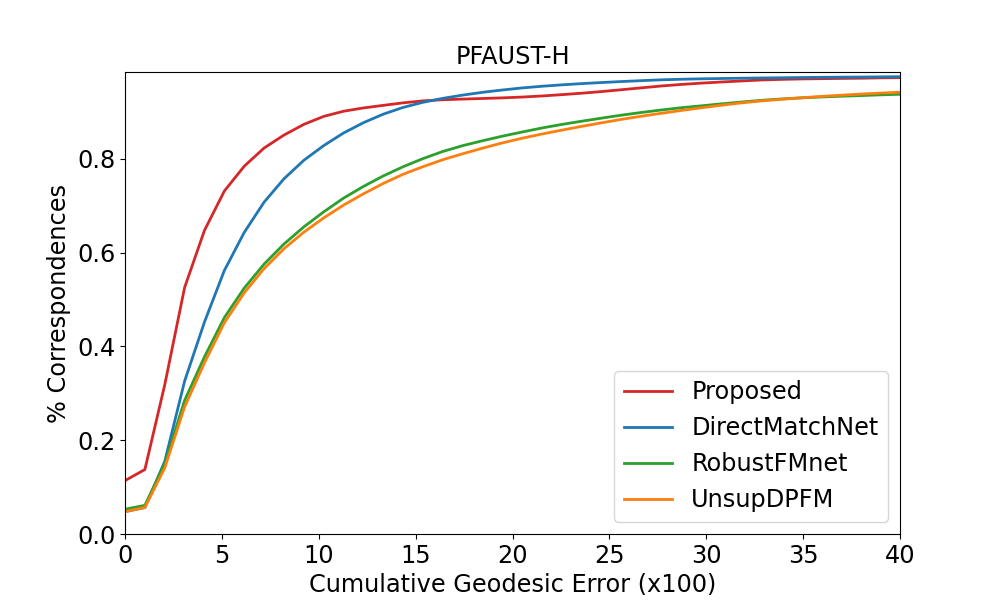}
    %}
    \\
    %\frame{
    \adjincludegraphics[width=0.48\textwidth,
    trim={{0.045\width} 0 {0.08\width} {0.06\height}}
    ,clip]{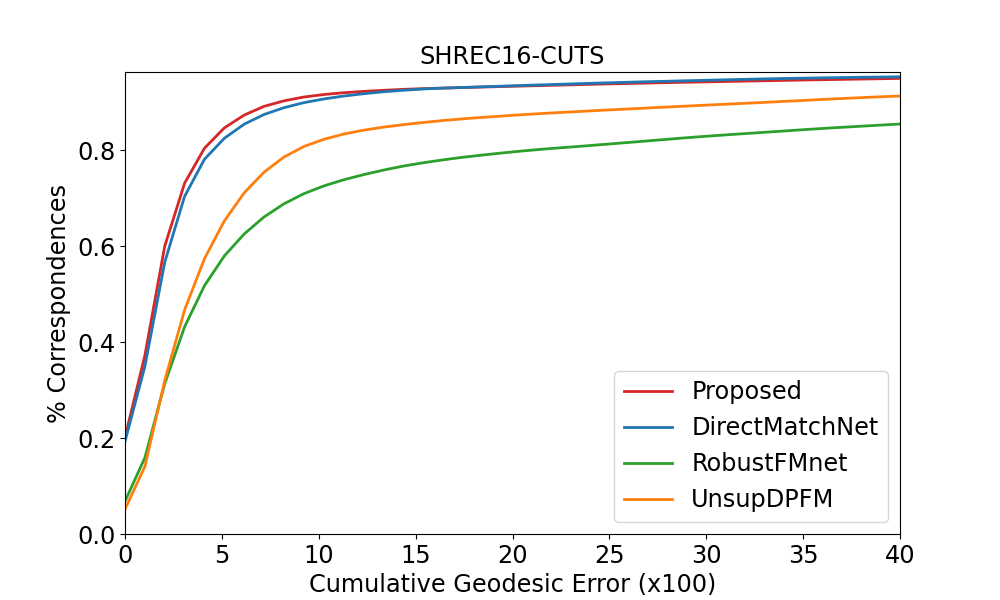}
    %}
    \hfill
    %\frame{
    \adjincludegraphics[width=0.48\textwidth,
    trim={{0.045\width} 0 {0.08\width} {0.06\height}}
    ,clip]{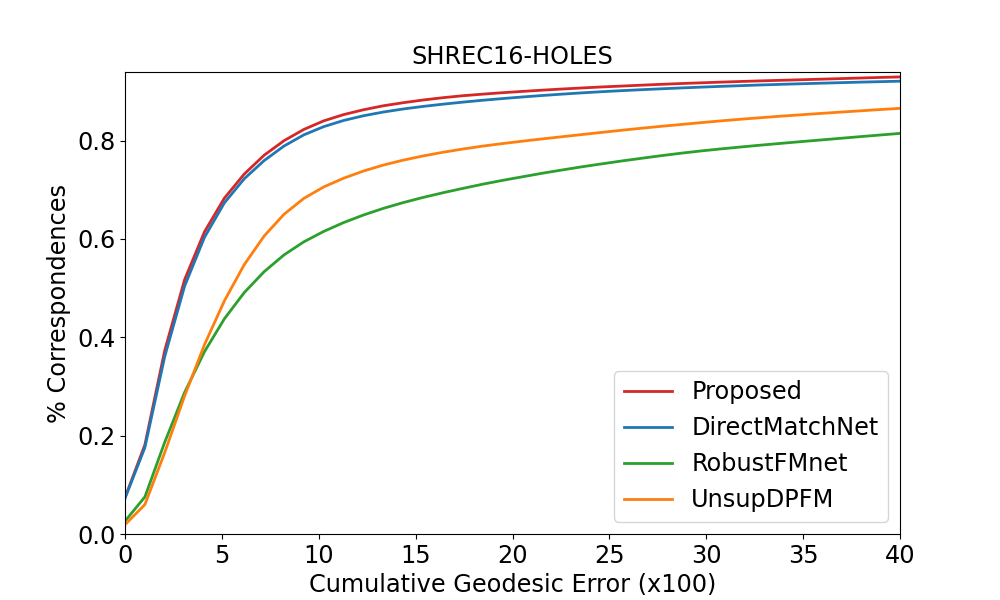}
    %}
    \caption{
    PCK curves of the geodesic error of the baselines and our method on the SHREC'16 CUTS and HOLES and the PFAUST M and H datasets.
    }
    \label{fig:res pck all}
\end{figure}

\end{document}